\documentclass{article} 
\usepackage{graphicx, booktabs}
\usepackage{multirow}
\usepackage{iclr2026_conference,times}

\usepackage{amsmath,amsfonts,bm}

\def\eqref#1{equation~\ref{#1}}

\def\1{\bm{1}}

\DeclareMathAlphabet{\mathsfit}{\encodingdefault}{\sfdefault}{m}{sl}
\SetMathAlphabet{\mathsfit}{bold}{\encodingdefault}{\sfdefault}{bx}{n}

\usepackage{amssymb}
\usepackage{amsthm}

\newtheorem{theorem}{Theorem}[section]

\theoremstyle{definition}

\theoremstyle{remark}
\newtheorem{remark}[theorem]{Remark}

\usepackage{hyperref}
\usepackage{url}

\title{Potential Matching Optimal Transport: Continuous Normalizing Flows for Exact $p$-Wasserstein Dynamics}

\author{
Lishuo Zhang$^{1}$ \quad Ruizhi Huang$^{1}$ \quad Yang Yu$^{1}$ \quad Lei Li$^{1,2}$\thanks{Corresponding author.} \\
$^{1}$School of Mathematical Sciences, Shanghai Jiao Tong University \\
$^{2}$Institute of Natural Sciences, MOE-LSC, Shanghai Jiao Tong University \\
Shanghai 200240, China \\
\texttt{ShawnLi9@sjtu.edu.cn} \quad
\texttt{hoperealizer@163.com} \\
\texttt{yuyang2357@sjtu.edu.cn} \quad
\texttt{leili2010@sjtu.edu.cn}
}

\iclrfinalcopy 
\begin{document}

\maketitle
\begin{abstract}
We introduce Potential Matching Optimal Transport (PMOT), a potential-flow
framework for general $p$-cost optimal transport with
$c_p(x,y)=\|x-y\|^p$. PMOT parameterizes the CNF velocity field with a scalar potential in the generalized Benamou--Brenier form for the chosen exponent $p$. It trains the potential gradient with a self-induced matching loss along straight bridges determined by the model's own endpoints, while allowing flexible terminal distribution
matching. Our main result establishes zero-loss exactness:
under the stated regularity, exact terminal matching, and uniqueness assumptions,
any zero-loss solution satisfies the generalized Benamou--Brenier optimality
system and recovers the corresponding $p$-optimal transport map and dynamics.
On synthetic benchmarks, PMOT learns $p$-specific maps that agree with the
corresponding $p$-matched OT references. It also remains competitive
as a likelihood-based density model on high-dimensional tabular data, and
an MMD-based color transformation experiment demonstrates flexible
sample-based terminal matching.
\end{abstract}

\section{Introduction}
Continuous Normalizing Flows (CNFs) learn invertible transformations between data distributions and tractable reference priors through time-dependent ODE dynamics \citep{chen2018neural, grathwohl2018ffjord}. They support exact likelihood evaluation via the continuous change-of-variables formula while simultaneously defining a transport trajectory between distributions. This makes CNFs a natural interface between likelihood-based generative modeling and optimal transport (OT).

A particularly influential line of work connects CNFs with OT by regularizing the learned dynamics toward low-cost transport paths. OT-Flow \citep{onken2021otflow}, for example, introduces a potential-flow parameterization together with a kinetic-energy regularizer, yielding a likelihood-based model whose dynamics are closely related to the Benamou--Brenier formulation of quadratic-cost optimal transport \citep{benamou2000computational}. However, the resulting geometry is naturally tied to the quadratic cost, or equivalently the $W_2$ geometry. Many OT problems are instead defined by more general costs
\[
    c_p(x,y)=\|x-y\|^p,
\]
where the exponent $p$ controls the induced transport geometry and, especially in asymmetric transport problems, can lead to different optimal couplings and trajectories \citep{villani2009optimal,peyre2019computational}. This motivates likelihood-based potential flows whose dynamics are aligned not only with the quadratic case, but also with general $p$-cost OT.

In this work, we introduce \emph{Potential Matching Optimal Transport} (PMOT), a potential-flow framework for general $p$-cost optimal transport with terminal distribution matching. PMOT preserves the main structural advantages of OT-Flow-style CNFs when instantiated with a KL/NLL terminal loss: the dynamics remain continuous, and likelihoods are evaluated through the CNF
change-of-variables formula. PMOT parameterizes the velocity field in the
generalized Benamou--Brenier form with conjugate exponent $q=p/(p-1)$, thereby
making the dynamics depend on the target $p$-cost geometry. Rather than using a
quadratic kinetic-energy penalty, PMOT introduces a self-induced
potential matching residual. For each data sample $x$, the current CNF produces
a terminal endpoint $F_\theta(x)$, and the residual matches the potential
gradient along the induced straight bridge to
\[
    -\|F_\theta(x)-x\|^{p-2}\bigl(F_\theta(x)-x\bigr).
\]
This condition is equivalent to matching the induced velocity to the
constant-speed bridge velocity. It requires no precomputed OT coupling and
instead imposes consistency on the model's own endpoint map. 

The central theoretical property of PMOT is a zero-loss exactness result.
Under the stated regularity, exact terminal matching, and uniqueness
assumptions, every zero-loss solution satisfies the generalized Benamou--Brenier
optimality system for the corresponding $p$-cost problem. Consequently, in
this idealized limit, the induced flow recovers the exact $p$-Wasserstein
dynamics and attains the optimal transport cost. In finite-sample neural
training, this exactness is subject to empirical sampling, model capacity,
numerical integration, and optimization error; our empirical claims are
therefore stated in terms of alignment with $p$-matched OT references rather
than unconditional exact recovery. Accordingly, we evaluate geometric
fidelity using post-hoc OT references computed with the same exponent as the
trained model.

We evaluate PMOT on synthetic transport, high-dimensional tabular density
estimation, and image color transformation. The experiments assess
$p$-matched geometric alignment, generation under mild objective weighting,
likelihood-based density modeling, and MMD-based sample transport.

Our contributions are summarized as follows:
\begin{itemize}
    \item We introduce PMOT, a potential-flow CNF framework for general
    $p$-cost optimal transport, trained through self-induced potential matching
    without external OT couplings or inner OT optimization.
    
    \item We prove that, under the stated regularity, exact terminal matching,
and uniqueness assumptions, every zero-loss PMOT solution recovers the
corresponding $p$-optimal transport map and dynamics.
    
    \item We evaluate each model against OT references computed with the matching
cost exponent and demonstrate $p$-specific geometric alignment, competitive
likelihood modeling, and flexible terminal matching through KL/NLL and MMD
objectives.
\end{itemize}

\section{Preliminaries}
\subsection{General $p$-Cost OT and Displacement Interpolation}

Let $\mu_0,\mu_1$ be probability measures on $\mathbb{R}^d$ with finite
$p$-th moments. For $p\geq1$, the $p$-cost optimal transport problem is
\[
W_p^p(\mu_0,\mu_1)
=
\inf_{\pi\in\Pi(\mu_0,\mu_1)}
\int \|x-y\|^p\,\mathrm d\pi(x,y).
\]
When an optimal Monge map $T_p$ exists, it satisfies
$T_{p\#}\mu_0=\mu_1$ and attains the same cost. Its displacement interpolation is
\[
X_t(x)=(1-t)x+tT_p(x),
\qquad
\rho_t=(X_t)_\#\mu_0,
\]
with velocity
\[
u_p(t,X_t(x))=T_p(x)-x.
\]
This path attains the dynamic $p$-cost:
\[
\int_0^1\int_{\mathbb{R}^d}\|u_p(t,z)\|^p\,
\mathrm d\rho_t(z)\,\mathrm dt
=
\int_{\mathbb{R}^d}\|T_p(x)-x\|^p\,\mathrm d\mu_0(x)
=
W_p^p(\mu_0,\mu_1).
\]

\subsection{Benamou--Brenier Optimality and Potential Velocities}

The same $p$-cost transport problem has a dynamic Benamou--Brenier
formulation\citep{santambrogio2015optimal}. The dynamic problem is
\[
    \inf_{\rho,v}
    \frac{1}{p}
    \int_0^1\int_{\mathbb{R}^d}
    \|v(t,x)\|^p\,d\rho_t(x)\,dt
\]
subject to the endpoint constraints $\rho_{t=0}=\mu_0$,
$\rho_{t=1}=\mu_1$ and the continuity equation
\[
    \partial_t\rho_t+\nabla_x\cdot(\rho_t v)=0 .
\]
In a smooth formal derivation, the scalar field $\Phi(t,x)$ appears as the
Lagrange multiplier for this continuity-equation constraint. The corresponding
Euler--Lagrange condition with respect to the velocity gives on the support of $\rho_t$ that
\[
    \|v\|^{p-2}v=-\nabla_x\Phi .
\]
Since $p>1$, the map $z\mapsto\|z\|^{p-2}z$ is invertible, and therefore the
optimal velocity has the potential form
\[
    v=-\|\nabla_x\Phi\|^{q-2}\nabla_x\Phi
    =
    -\nabla_\xi H_q(\nabla_x\Phi),
    \qquad
    H_q(\xi)=\frac{1}{q}\|\xi\|^q .
\]
Thus, the potential parameterization used by PMOT is not an arbitrary assumption:
it mirrors the velocity--multiplier relation in the smooth
Benamou--Brenier optimality system. The Euler--Lagrange condition with respect to $\rho$ then yields
$\partial_t\Phi+\nabla_x\Phi\cdot v+\frac{1}{p}\lVert v\rVert^p=0$ and thus on the support of $\rho_t$ that
\[
\partial_t\Phi-\frac{1}{q}\|\nabla\Phi\|^q=0.
\]
As we shall see, this corresponds to the fact that the velocity of a particle stays constant along its trajectory, a key feature of the OT. During neural training,
$\Phi_\theta$ is a model parameter; it coincides with the BB multiplier only in
the ideal zero-loss optimal regime, up to adding a function of time, which
does not change $\nabla_x\Phi_\theta$ or the induced velocity.

\subsection{Continuous Normalizing Flows}

A Continuous Normalizing Flow defines an invertible map through an ordinary differential equation
\[
    \frac{d z_t}{dt} = v_\theta(t,z_t), \qquad z_0=x.
\]
Let $F_\theta$ denote the time-one flow map, $F_\theta(x) = z_1$. If the terminal density is chosen as a tractable prior, such as $\mathcal{N}(0,I)$, the data likelihood is computed by the instantaneous change-of-variables formula:
\[
    \frac{d}{dt}\log \rho_t(z_t)
    =
    - \nabla \cdot v_\theta(t,z_t).
\]
Therefore,
\[
    \log \rho_0(x)
    =
    \log \rho_1(z_1)
    +
    \int_0^1
    \nabla \cdot v_\theta(t,z_t) \, dt,
\]
up to the sign convention determined by the integration direction. CNFs therefore provide likelihood evaluation while simultaneously defining a continuous transport trajectory.

\section{Potential Matching Optimal Transport}
\label{sec:pmot}

We introduce \emph{Potential Matching Optimal Transport} (PMOT), a variational
framework for general $p$-cost optimal transport. PMOT parameterizes the CNF
velocity field through a scalar potential and optimizes two terms: potential
matching along model-generated straight bridges and terminal distribution
matching. Under the assumptions stated below, every sufficiently regular
zero-loss solution recovers the unique $p$-optimal Monge map and its
corresponding Benamou--Brenier velocity field.

\subsection{Potential-Induced Flow}

Let $p>1$ and $q=p/(p-1)$. Following the Benamou--Brenier optimality relation
described above, we take a scalar potential
$\Phi:[0,1]\times\mathbb R^d\to\mathbb R$ as the primary variable and define
the induced velocity field
\begin{equation}
    v_\Phi(t,x)
    =
    -\lVert\nabla_x\Phi(t,x)\rVert^{q-2}\nabla_x\Phi(t,x).\label{eq:pmot_velocity}
\end{equation}
For $p=2$, \eqref{eq:pmot_velocity} reduces to the
familiar potential-flow relation
$v_\Phi(t,x)=-\nabla_x\Phi(t,x)$.

For each $x\in\mathbb R^d$, consider the initial-value problem
\begin{equation}\label{eq:pmot_flow}
\partial_t z(t)
=
v_\Phi\bigl(t,z(t)\bigr), \text{for almost every }t\in(0,1),
\qquad
z(0)=x.
\end{equation}
Throughout, we restrict attention to potentials $\Phi$ such that, for every $x\in\mathbb R^d$, problem~\eqref{eq:pmot_flow} admits a unique solution in $\mathrm{AC}([0,1];\mathbb R^d)$. We denote this solution by $z_\Phi(\cdot,x)$. And we further require that the associated solution map $z_\Phi:[0,1]\times\mathbb R^d\to\mathbb R^d$ is jointly Borel measurable.

The terminal flow map $F_\Phi:\mathbb R^d\to\mathbb R^d$ is defined by
\[
F_\Phi(x)=z_\Phi(1,x).
\]
The distribution induced at time $t$ is
\[
\rho_t^\Phi
=
\bigl(z_\Phi(t,\cdot)\bigr)_\#\mu_0.
\]
In particular, $\rho_0^\Phi=\mu_0,\rho_1^\Phi=(F_\Phi)_\#\mu_0$.

\subsection{The PMOT Variational Objective}

The potential matching loss is defined by
\[
\mathcal{L}_{\mathrm{PM}}(\Phi)
:=
\mathbb{E}_{x\sim\mu_0}
\left[
\int_0^1
\left\|
\nabla_x\Phi\!\left(
t,(1-t)x+tF_\Phi(x)
\right)
+
\left\|F_\Phi(x)-x\right\|^{p-2}
\left(F_\Phi(x)-x\right)
\right\|^2
\,\mathrm{d}t
\right].
\]

Let $\mathcal D$ be a nonnegative discrepancy between probability measures
satisfying $\mathcal D(P,Q)=0
\Leftrightarrow
P=Q$. We define the terminal distribution loss by
\[
\mathcal L_{\mathrm{term}}(\Phi)
:=
\mathcal D(\rho_1^\Phi,\mu_1).
\]
A principal example is the forward KL divergence, which gives $
\mathcal L_{\mathrm{term}}(\Phi)
=
\mathrm{KL}(\rho_1^\Phi\Vert\mu_1)$. 

We define the PMOT objective to be
\[
\mathcal L_{\mathrm{PMOT}}(\Phi)
:=
\lambda_{\mathrm{PM}}\mathcal L_{\mathrm{PM}}(\Phi)
+
\lambda_{\mathrm{term}}\mathcal L_{\mathrm{term}}(\Phi),
\]
where $\lambda_{\mathrm{PM}},\lambda_{\mathrm{term}}>0$ are fixed weights. The domain \(\operatorname{dom}(\mathcal L_{\mathrm{PMOT}})\) consists of all potentials \(\Phi\) satisfying the preceding flow well-posedness and joint Borel-measurability requirements for which both \(\mathcal L_{\mathrm{PM}}(\Phi)\) and \(\mathcal L_{\mathrm{term}}(\Phi)\) are well defined.

\begin{remark}[Velocity-space alternative]
Potential matching loss may be replaced by the velocity residual
\[
\left\|
v_\Phi\bigl(t,(1-t)x+tF_\Phi(x)\bigr)
-\bigl(F_\Phi(x)-x\bigr)
\right\|^2.
\]
The two formulations have the same zero set and coincide when $p=2$.
At finite loss, the nonlinear duality map changes their optimization
conditioning. This suggests that velocity matching may sometimes be preferable
for $p<2$ and potential matching for $p>2$, although neither uniformly
dominates the other. We use potential matching as the default formulation throughout this work.
\end{remark}

\subsection{Zero Loss Implies Optimal Transport}

Our main theorem is:

\begin{theorem}[Exact recovery and velocity-field uniqueness at zero loss]
\label{thm:pmot_exactness}
Let $p\in (1,\infty)$, let $\mu_0,\mu_1\in\mathcal P_p(\mathbb R^d)$, and write $\operatorname{dom}(\mathcal L_{\mathrm{PMOT}})$ for the domain of the PMOT objective. Assume that $\mu_0$ is absolutely continuous with respect to Lebesgue measure. Assume that the $p$-Benamou--Brenier problem admits a minimizer $(\rho^\star,v^\star)=(\rho^{\Phi^\star},v_{\Phi^\star})$ for some $\Phi^\star\in\operatorname{dom}(\mathcal L_{\mathrm{PMOT}})$. Then
\[
\min_{\Phi\in\operatorname{dom}(\mathcal L_{\mathrm{PMOT}})}
\mathcal L_{\mathrm{PMOT}}(\Phi)
=\mathcal L_{\mathrm{PMOT}}(\Phi^\star)=0.
\]
Furthermore, let $\Phi$ be a global minimizer. 
If $\Phi\in C^2((0,1)\times\mathbb R^d)$, and if
$\operatorname{supp}(\rho_t^\Phi)=\mathbb R^d$ for every $t\in(0,1)$, then $v_\Phi=v^\star$,  $\mathrm dt\,\rho_t^\Phi(\mathrm dx)$-almost everywhere, and the corresponding terminal map $F_\Phi$ is the unique optimal Monge map from $\mu_0$ to $\mu_1$ for the cost $\lVert x-y\rVert^p$.
\end{theorem}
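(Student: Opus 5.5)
The proof has two parts: first show that $\Phi^\star$ is a zero-loss element of the domain, then show that every zero-loss $\Phi$ meeting the regularity assumptions coincides with the Benamou--Brenier minimizer.

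\textbf{Step 1: $\mathcal L_{\mathrm{PMOT}}(\Phi^\star)=0$.} Both loss terms are nonnegative, so it is enough to show that each vanishes at $\Phi^\star$.

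Since $(\rho^\star,v^\star)$ is admissible, $\rho_1^{\Phi^\star}=\mu_1$, and therefore $\mathcal L_{\mathrm{term}}(\Phi^\star)=0$.

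For the potential matching term, I would use the standard structure of $p$-Benamou--Brenier minimizers. Because $\mu_0\ll\mathcal L^d$ and $p>1$, the Gangbo--McCann/Brenier theory gives a unique optimal plan, induced by a Monge map $T_p$. The minimizing curve is the displacement interpolation $(X_t)_\#\mu_0$, with $X_t=(1-t)x+tT_p(x)$. Its minimal-norm velocity satisfies $v^\star(t,X_t(x))=T_p(x)-x$ for $\mathrm dt\,\mu_0$-a.e.\ $(t,x)$; this follows from the equality case of Jensen, $\int\|v\|^p\ge W_p^p$, together with the superposition principle.

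Since $v^\star=v_{\Phi^\star}$ and the flow is well-posed, I would identify the trajectories: $z_{\Phi^\star}(t,x)=X_t(x)$. The straight line $X_t(x)$ is an absolutely continuous solution of the ODE for a.e.\ $x$, and uniqueness of solutions forces the identification. Hence $F_{\Phi^\star}=T_p$.

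Now invert the duality map $z\mapsto\|z\|^{q-2}z$ using $(q-1)(p-1)=1$. This gives $\nabla\Phi^\star(t,X_t)=-\|T_p-x\|^{p-2}(T_p-x)$, so the integrand of $\mathcal L_{\mathrm{PM}}$ vanishes a.e.

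\textbf{Step 2: a zero-loss $\Phi$ has straight trajectories.} Let $\Phi$ be a global minimizer, so $\mathcal L_{\mathrm{PMOT}}(\Phi)=0$.

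First, $\mathcal L_{\mathrm{term}}(\Phi)=0$ gives $(F_\Phi)_\#\mu_0=\mu_1$.

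Second, $\mathcal L_{\mathrm{PM}}(\Phi)=0$ gives, for $\mu_0$-a.e.\ $x$ and a.e.\ $t$,
\[
v_\Phi\bigl(t,(1-t)x+tF_\Phi(x)\bigr)=F_\Phi(x)-x .
\]
Continuity of $\nabla\Phi$ on $(0,1)\times\mathbb R^d$ upgrades this to every $t\in(0,1)$. So the straight segment $y(t)=(1-t)x+tF_\Phi(x)$ solves the ODE with $y(0)=x$, and uniqueness gives $z_\Phi(t,x)=y(t)$.

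Therefore
\[
\int_0^1\!\!\int\|v_\Phi\|^p\,\mathrm d\rho_t^\Phi\,\mathrm dt=\int\|F_\Phi(x)-x\|^p\,\mathrm d\mu_0 .
\]
This is the cost of the transport map $F_\Phi$, which is at least $W_p^p$. Straightness alone does not yet give optimality: this inequality is exactly where the remaining work lies.

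\textbf{Step 3: optimality of $F_\Phi$ (the main obstacle).} I would derive the Hamilton--Jacobi equation, which certifies $c_p$-cyclical monotonicity.

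\emph{Hamilton--Jacobi along trajectories.} Along each straight trajectory the velocity is constant. Differentiate the identity $\nabla\Phi(t,z(t))=-\|F-x\|^{p-2}(F-x)$ in $t$; this is legitimate because $\Phi\in C^2$. This gives $\partial_t\nabla\Phi+D^2\Phi\,v_\Phi=0$ along trajectories.

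The full support of $\rho_t^\Phi$ makes the trajectories dense. By continuity the identity then holds everywhere on $(0,1)\times\mathbb R^d$.

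Next, $D^2\Phi\,v_\Phi=\nabla\bigl(\tfrac1q\|\nabla\Phi\|^q\bigr)$, which comes from $\nabla_\xi H_q=\|\xi\|^{q-2}\xi$ and the symmetry of $D^2\Phi$. Hence
\[
\nabla_x\Bigl(\partial_t\Phi-\tfrac1q\|\nabla\Phi\|^q\Bigr)=0 .
\]
So $\partial_t\Phi-\tfrac1q\|\nabla\Phi\|^q=c(t)$, and we absorb $c(t)$ into $\Phi$.

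\emph{Hopf--Lax and Kantorovich duality.} A $C^1$ solution of this HJ equation satisfies the subsolution inequality along arbitrary curves. For any $y$ and $t$, integrate along the segment from $x$ to $y$ and use Young's inequality $\xi\cdot w\le\tfrac1q\|\xi\|^q+\tfrac1p\|w\|^p$. This yields
\[
\Phi(1,y)-\Phi(0,x)\ge-\tfrac1p\|y-x\|^p ,
\]
with equality along the actual trajectories, where $y=F_\Phi(x)$.

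Integrate this against any coupling $\pi\in\Pi(\mu_0,\mu_1)$. Because $\rho_1^\Phi=\mu_1$, the potential terms are the same for every coupling. It follows that the coupling $(\mathrm{id},F_\Phi)_\#\mu_0$ minimizes $\tfrac1p\int\|x-y\|^p\,\mathrm d\pi$.

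\emph{Integrability caveat.} This needs $\Phi(0,\cdot)$ and $\Phi(1,\cdot)$ to be integrable. The safer route is to work on $[\varepsilon,1-\varepsilon]$, where $\Phi$ is $C^2$, and pass to the limit $\varepsilon\to0$ using finite $p$-th moments. If that limit is delicate, I would instead verify cyclical monotonicity of the support of $(\mathrm{id},F_\Phi)_\#\mu_0$ pointwise from the same inequality, applied to finite families of points.

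\textbf{Step 4: conclusion.} The $p$-optimal plan is unique because $\mu_0\ll\mathcal L^d$ and the cost is strictly convex. Therefore $F_\Phi=T_p$ $\mu_0$-a.e.

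Then $z_\Phi(t,\cdot)=X_t$, so $\rho^\Phi_t=\rho^\star_t$. Along the trajectories $v_\Phi(t,X_t)=T_p-x=v^\star(t,X_t)$. Hence $v_\Phi=v^\star$ $\mathrm dt\,\rho_t$-a.e., as claimed.
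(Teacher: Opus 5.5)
Your proposal is correct. It shares the paper's architecture: Jensen's equality case at $\Phi^\star$; straight characteristics from continuity in $t$ plus ODE uniqueness; a Hamilton--Jacobi equation obtained along trajectories and extended by full support; then Young/Fenchel and $c_p$-cyclical monotonicity. Two steps, however, are carried out differently.

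\textbf{Step~1.} You characterize the Benamou--Brenier minimizer through the superposition principle and then match trajectories by ODE uniqueness. The paper needs no superposition, because $\rho^\star_t=(z_{\Phi^\star}(t,\cdot))_\#\mu_0$ is given by hypothesis. The action therefore equals $\int\int_0^1\lVert\partial_t z_{\Phi^\star}\rVert^p\,\mathrm dt\,\mathrm d\mu_0$ directly, and a single equality chain with $W_p^p$ yields both optimality of $F_{\Phi^\star}$ and straightness.

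\textbf{Step~3.} Here your route is cleaner than the paper's. You differentiate $g=\nabla_x\Phi$, which is constant along each trajectory by the zero-residual identity itself. This gives $\partial_t g+D_x^2\Phi\,v_\Phi=0$ on trajectories. That expression equals $\nabla_x\bigl(\partial_t\Phi-H_q(\nabla_x\Phi)\bigr)$ and is continuous on all of $(0,1)\times\mathbb R^d$, because $A_q$ is continuous. The paper instead differentiates $v_\Phi=-A_q(g)$. It must therefore restrict to $U=\{g\neq0\}$, where $v_\Phi$ is $C^1$ and $D^2H_q(g)$ is invertible, prove positive definiteness of $D^2H_q(g)$ to cancel it, and treat the zero set of $g$ separately. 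Your version removes all three of these steps.

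Two small fixes are needed.
\begin{itemize}
\item The intermediate identity should read $D_x^2\Phi\,v_\Phi=-\nabla_x\bigl(\tfrac1q\lVert\nabla_x\Phi\rVert^q\bigr)$. Your final HJ equation has the correct sign; the intermediate one does not.
\item For the optimality step you should commit to your fallback rather than the integrated Kantorovich argument. The hypotheses give neither integrability of $\Phi(\varepsilon,S_\varepsilon(\cdot))$ nor any control of $\Phi$ at $t=0,1$. The finite-family version uses the points $S_\varepsilon(x_i)$ and $S_{1-\varepsilon}(x_{\sigma(i)})$ with the rescaled cost $\lVert\cdot\rVert^p/(p(1-2\varepsilon)^{p-1})$, followed by $\varepsilon\to0$. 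This is exactly the paper's argument and needs no integrability.
\end{itemize}
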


Under the stated assumptions, this theorem provides the theoretical justification for PMOT: any sufficiently regular potential $\Phi$ attaining zero PMOT loss recovers the unique $p$-optimal Monge map and the corresponding Benamou--Brenier velocity field. Thus, in the ideal zero-loss setting, the PMOT objective exactly identifies the dynamic $p$-optimal transport solution. The proof is given in Appendix~\ref{app:pmot_proof_details}.

\begin{remark}
The second part of Theorem~\ref{thm:pmot_exactness} requires only interior regularity: \(\Phi\in C^2((0,1)\times\mathbb R^d)\), with no regularity assumption at \(t=0\) or \(t=1\). The \(C^2\) condition is imposed for simplicity; the proof also applies under weaker assumptions whenever the required weak derivatives, chain rules, and extension arguments remain valid.
Moreover, the nondegeneracy of the support of $\rho_t^{\Phi}$ for $t\in (0, 1)$ is essential for the minimizer to be the solution of OT. See Appendix \ref{app:counterexample_no_full_support} for counterexamples.
\end{remark}

\subsection{Neural Parameterization and Training}
\label{sec:neural_parameterization}

For computation, we restrict the potential to a neural family
$\{\Phi_\theta\}$ and use the scalar-valued ResNet potential architecture of
OT-Flow~\citep{onken2021otflow}. Let
$g_\theta(t,x)=\nabla_x\Phi_\theta(t,x)$. The implementation uses the stabilized
velocity
\begin{equation}
    v_{\theta,\epsilon}(t,x)
    =
    -\bigl(\|g_\theta(t,x)\|^2+\epsilon\bigr)^{(q-2)/2}
    g_\theta(t,x),
    \qquad
    \epsilon=10^{-12},
    \label{eq:pmot_parameterized_velocity}
\end{equation}
which approaches the population velocity in
\eqref{eq:pmot_velocity} as $\epsilon\to0$. We write $v_\theta$ for
$v_{\theta,\epsilon}$ below. The resulting ODE defines the flow
$z_\theta(t,x)$ and terminal map $F_\theta(x)=z_\theta(1,x)$.


Given a minibatch $\{x_0^{(i)}\}_{i=1}^B$ from $\mu_0$, a forward ODE solve
produces the model-induced endpoints
$x_1^{(i)}=F_\theta(x_0^{(i)})$. For independently sampled
$t_i\sim\mathcal U[0,1]$, define
\[
    \bar x_{t_i}^{(i)}
    =
    (1-t_i)x_0^{(i)}+t_i x_1^{(i)},
    \qquad
    u^{(i)}=x_1^{(i)}-x_0^{(i)} .
\]
Let
\[
    m_p(u)=\|u\|^{p-2}u,
    \qquad m_p(0)=0,
\]
denote the momentum dual to the $p$-power kinetic energy. The stochastic
self-induced potential-matching loss is
\begin{equation}
    \widehat{\mathcal L}_{\mathrm{PM}}(\theta)
    =
    \frac{1}{B}\sum_{i=1}^B
    \left\|
        \nabla_x\Phi_\theta
        \bigl(t_i,\bar x_{t_i}^{(i)}\bigr)
        +
        m_p\bigl(u^{(i)}\bigr)
    \right\|^2.
    \label{eq:pmot_empirical_pm}
\end{equation}

The model-induced endpoints remain attached to the computational graph, so
gradients propagate through $F_\theta$ in the potential-matching loss.

The population terminal objective remains
\begin{equation}
    \mathcal L_{\mathrm{term}}(\theta)
    =
    \mathcal D\bigl((F_\theta)_\#\mu_0,\mu_1\bigr).
    \label{eq:pmot_general_terminal_loss}
\end{equation}
For likelihood-based training, we use the forward-KL discrepancy and optimize
its equivalent CNF negative log-likelihood. Writing
\[
    \ell_\theta(x)
    =
    \int_0^1
    \nabla\cdot v_\theta\bigl(t,z_\theta(t,x)\bigr)\,\mathrm dt,
\]
the empirical NLL is
\[
    \widehat{\mathcal L}_{\mathrm{NLL}}(\theta)
    =
    -\frac{1}{B}\sum_{i=1}^B
    \left[
        \log\varphi_1\bigl(F_\theta(x_0^{(i)})\bigr)
        +
        \ell_\theta(x_0^{(i)})
    \right],
\]
where $\varphi_1$ is the terminal reference density. The NLL differs from the forward-KL objective by a
$\theta$-independent source-entropy constant. Therefore, the zero-loss
statement in Theorem~\ref{thm:pmot_exactness} refers to the population KL
discrepancy, not to the numerical NLL value reported in experiments. 

For sample-based terminal
matching, we instead use a minibatch estimate of squared MMD between
$\{F_\theta(x_0^{(i)})\}$ and independent samples from
$\mu_1$~\citep{gretton2012kernel}.

In either case, the implemented objective is
\begin{equation}
    \widehat{\mathcal J}_{\mathrm{PMOT}}(\theta)
    =
    \lambda_{\mathrm{PM}}
    \widehat{\mathcal L}_{\mathrm{PM}}(\theta)
    +
    \lambda_{\mathrm{term}}
    \widehat{\mathcal L}_{\mathrm{term}}(\theta).
    \label{eq:pmot_empirical_objective}
\end{equation}
No external OT pairs or HJB residual are used during training. We solve the
dynamics using a differentiable fixed-step RK4 scheme and backpropagate through
the ODE solve. Likelihood evaluation integrates data forward to the reference,
whereas generation integrates reference samples backward to the data space.

\section{Experiments}
\label{sec:experiments}

Our experiments are designed to evaluate three aspects of PMOT. First, we test whether PMOT endpoint maps agree with post-hoc OT references computed under the intended $p$-cost. Second, we examine whether the self-induced matching objective yields stable generation under mild likelihood weighting. Third, we evaluate whether PMOT remains competitive as a likelihood-based CNF on high-dimensional tabular density estimation. Across all experiments, Sinkhorn references are used only for evaluation and visualization, and never to construct training pairs.

\subsection{Toy Transport Geometry}
\label{sec:synthetic}

Two-dimensional synthetic distributions provide a controlled setting in which
post-hoc OT references can be estimated. For each exponent $p$, we
compute an entropically regularized Sinkhorn coupling with
$c_p(x,y)=\|x-y\|^p$~\citep{cuturi2013sinkhorn}. Its transport cost and
barycentric projection provide the reference cost and endpoint map,
respectively; the resulting coupling and its derived quantities are used only post hoc for
evaluation and visualization, never for training. We report the
model-induced $p$-cost, its ratio to the reference cost, and endpoint MSE to
the $p$-matched barycentric reference. Sinkhorn references are never used to
construct training pairs. Additional unconditional generation results for the
toy distributions are provided in Appendix~\ref{app:toy_generation}.

\begin{table}[t]
\centering
\caption{
OT-geometry evaluation on 8-Gaussians and Pinwheel. PMOT endpoints are compared
with post-hoc Sinkhorn barycentric references under the matching $p$-cost.
Sinkhorn is used only for evaluation, with $\varepsilon=0.02$ and 1000
iterations on 2000 samples. Ratios within a few percent of one indicate close
agreement with the reference.
}
\label{tab:ot_geometry_toy}
\vspace{0.5em}
\begin{tabular}{llcccc}
\toprule
Dataset & $p$ & PMOT $p$-cost & Sinkhorn $p$-cost & PMOT/Sinkhorn & Bary-MSE $\downarrow$ \\
\midrule
\multirow{3}{*}{8-Gaussians}
& 1.5 & 2.104 & 2.107 & 0.999 & 0.0166 \\
& 2.0 & 2.671 & 2.732 & 0.978 & 0.0144 \\
& 3.0 & 4.554 & 4.668 & 0.976 & 0.0139 \\
\midrule
\multirow{3}{*}{Pinwheel}
& 1.5 & 0.770 & 0.767 & 1.004 & 0.0195 \\
& 2.0 & 0.725 & 0.719 & 1.007 & 0.0208 \\
& 3.0 & 0.658 & 0.636 & 1.034 & 0.0219 \\
\bottomrule
\end{tabular}
\end{table}

Table~\ref{tab:ot_geometry_toy} gives the main quantitative geometry
evaluation. On 8-Gaussians, all cost ratios are within $2.4\%$ of one. On Pinwheel,
all ratios are within $3.4\%$ of one. Thus, when each model is evaluated against the matching
exponent, its endpoint map has both transport cost and barycentric discrepancy
close to the corresponding post-hoc OT reference.

To test whether the learned maps depend on the training exponent, we compare
each checkpoint trained with
$p_{\mathrm{train}}\in\{1.5,2,3\}$ against barycentric reference maps computed
with every $p_{\mathrm{ref}}\in\{1.5,2,3\}$. Each entry in
Figure~\ref{fig:pinwheel_p_swap_heatmap} reports the corresponding endpoint
MSE.

\begin{figure}[t]
    \centering
    \includegraphics[width=0.48\linewidth]{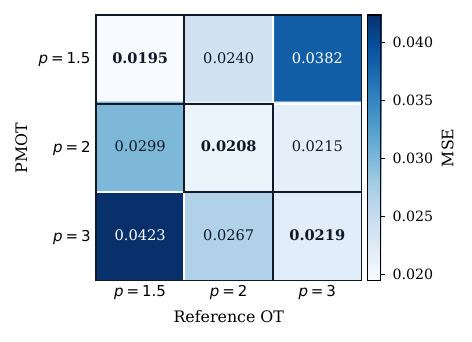}
    \caption{
    $p$-swap evaluation on Pinwheel.
Rows correspond to PMOT checkpoints trained with exponent
$p_{\mathrm{train}}$, and columns correspond to post-hoc Sinkhorn
barycentric reference maps computed with exponent $p_{\mathrm{ref}}$.
Each entry reports endpoint MSE for evaluation seed 0; lower is better.
Bold entries mark the minimum in each row.
Across five independently resampled evaluation batches with fixed checkpoints,
the mean off-diagonal endpoint MSE is $1.41\pm0.05$ times the mean diagonal
endpoint MSE.
    }
    \label{fig:pinwheel_p_swap_heatmap}
\end{figure}

Figure~\ref{fig:pinwheel_p_swap_heatmap} shows a diagonal preference: each
PMOT checkpoint is closest to the reference computed with its training
exponent. The separation is strongest between $p=1.5$ and $p=3$, while
neighboring exponents remain closer on this smooth toy distribution. This
supports $p$-specific behavior without requiring visually distinct
trajectories.

Moons provides a complementary qualitative example in a less symmetric
setting. Because entropic regularization can noticeably affect barycentric
references on this dataset, we use this experiment only to visualize the
learned coupling and continuous trajectories.

\begin{figure}[t]
    \centering
    \includegraphics[width=0.9\linewidth]{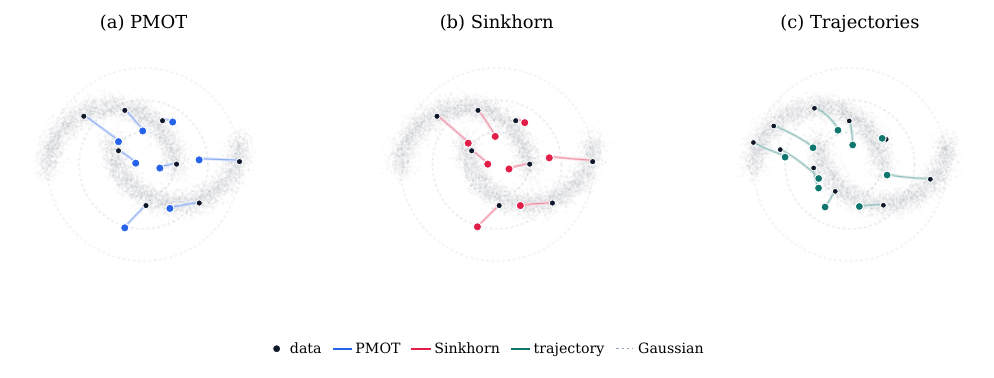}
    \caption{
    Learned transport geometry on two moons for $p=2$.
    (a) Coupling induced by the PMOT map.
    (b) Post-hoc Sinkhorn barycentric reference.
    (c) Continuous PMOT trajectories.
    The resulting coupling and its derived quantities are used only post hoc for
evaluation and visualization, never for training.
    }
    \label{fig:moons-ot-geometry-p2}
\end{figure}

Figure~\ref{fig:moons-ot-geometry-p2} compares the PMOT-induced map with a
post-hoc Sinkhorn reference and the continuous PMOT trajectories.

\subsection{Likelihood-Weight Sensitivity}
\label{sec:weight_sensitivity}

We next compare PMOT with OT-Flow on 8-Gaussians under equal training budgets. PMOT uses only self-induced potential matching and likelihood, with weights $(\lambda_{\mathrm{PM}},\lambda_{\mathrm{NLL}})=(1,1)$. OT-Flow uses its standard kinetic, likelihood, and HJB/R objective; we fix the kinetic and HJB/R coefficients to one and vary only the likelihood coefficient. This comparison tests whether good generation quality requires a strongly likelihood-dominated objective.

\begin{table}[t]
\centering
\caption{
Generation-quality comparison on 8-Gaussians. PMOT uses self-induced potential matching and likelihood weights $(\lambda_{\mathrm{PM}},\lambda_{\mathrm{NLL}})=(1,1)$ and has no HJB/R regularizer. For OT-Flow, $(\lambda_L,\lambda_C,\lambda_R)=(1,w_C,1)$ and only $w_C$ is varied. All models are trained for 10k iterations. Lower is better.
}
\label{tab:otflow_weight_sensitivity_8gaussians}
\vspace{0.5em}
\begin{tabular}{llccc}
\toprule
Method & Weights & Hist JS $\downarrow$ & Hist L1 $\downarrow$ & MMD $\downarrow$ \\
\midrule
PMOT & $(1,1)$
& \textbf{0.0272} & \textbf{0.2863} & \textbf{0.00063} \\
\midrule
\multirow{3}{*}[-0.3ex]{OT-Flow}
& $(1,1,1)$  & 0.1006 & 0.6951 & 0.00725 \\
& $(1,5,1)$  & 0.0316 & 0.3259 & 0.00136 \\
& $(1,10,1)$ & 0.0303 & 0.3118 & 0.00108 \\
\bottomrule
\end{tabular}
\end{table}

\begin{figure}[t]
    \centering
    \includegraphics[width=0.72\linewidth]{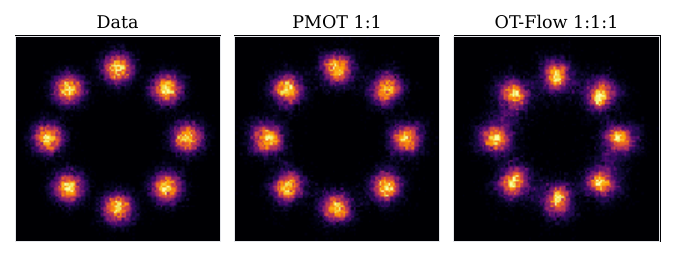}
    \caption{
Generation quality on 8-Gaussians under a limited training budget of 1000 iterations.
PMOT uses the mild balanced weights $(\lambda_{\mathrm{PM}},\lambda_{\mathrm{NLL}})=(1,1)$, while OT-Flow uses $(\lambda_L,\lambda_C,\lambda_R)=(1,1,1)$.
Under the same training budget and mild weighting, PMOT recovers the eight modes more clearly, while OT-Flow exhibits stronger inter-mode artifacts.
}
    \label{fig:pmot_otflow_budget_8gaussians}
\end{figure}

Figure~\ref{fig:pmot_otflow_budget_8gaussians} shows the same effect visually under a shorter 1000-iteration budget: PMOT already recovers the eight modes clearly with balanced weights, whereas OT-Flow with $(1,1,1)$ exhibits stronger inter-mode artifacts. Table~\ref{tab:otflow_weight_sensitivity_8gaussians} quantifies the trend after 10k iterations. OT-Flow improves substantially as the likelihood coefficient is increased, whereas PMOT obtains the best histogram and MMD scores with the balanced $(1,1)$ objective. This supports the view that the self-induced matching term provides useful transport structure without adding an HJB/R regularizer.

\subsection{High-Dimensional Density Estimation}
\label{sec:highdim_density}

\begin{table}[t]
\centering
\caption{
High-dimensional tabular density estimation. For PMOT, weights are reported as
$(\lambda_{\mathrm{PM}},\lambda_{\mathrm{NLL}})$; for OT-Flow, weights are reported as
$(\lambda_L,\lambda_C,\lambda_R)$. PMOT uses self-induced potential matching and likelihood without an HJB/R regularizer. MMD compares inverse-generated samples with held-out data. Lower is better.
}
\label{tab:highdim_density}
\vspace{0.5em}
\begin{tabular}{lllcc}
\toprule
Dataset & Method & Weights & NLL $\downarrow$ & MMD $\downarrow$ \\
\midrule
\multirow{2}{*}[-0.3ex]{MiniBooNE}
& PMOT & $(1,5)$ & 10.454 & $6.67{\times}10^{-4}$ \\
& OT-Flow & $(1,100,15)$ & 10.634 & $6.67{\times}10^{-4}$ \\
\midrule
\multirow{2}{*}[-0.3ex]{POWER}
& PMOT & $(1,5)$ & -0.383 & $2.91{\times}10^{-4}$ \\
& OT-Flow & $(1,500,5)$ & -0.312 & $3.57{\times}10^{-4}$ \\
\midrule
\multirow{2}{*}[-0.3ex]{HEPMASS}
& PMOT & $(1,5)$ & 16.922 & $5.00{\times}10^{-4}$ \\
& OT-Flow & $(1,500,40)$ & 17.351 & $5.00{\times}10^{-4}$ \\
\bottomrule
\end{tabular}
\end{table}

Table~\ref{tab:highdim_density} compares PMOT and OT-Flow on MiniBooNE, POWER, and HEPMASS \citep{papamakarios2017masked,grathwohl2018ffjord,onken2021otflow}. PMOT uses the same mild $(1,5)$ weighting across all three datasets and does not include an HJB/R regularizer. OT-Flow uses dataset-specific likelihood and HJB/R coefficients. In these runs, PMOT obtains lower NLL on all three datasets and comparable generation MMD, indicating that the simpler objective remains competitive in high-dimensional tabular density estimation.

\paragraph{Image color transformation.}
To illustrate sample-based terminal matching, we apply MMD-based PMOT to image
color transformation. Figure~\ref{fig:color_transform_transfer_montage} shows
the transformation along the learned flow; experimental details and additional
visualizations are provided in Appendix~\ref{app:color_transform}.

\begin{figure}[t]
    \centering
    \includegraphics[width=\linewidth]
    {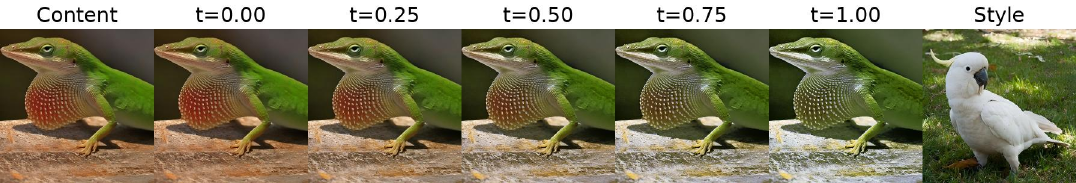}
    \caption{
    MMD-based image color transformation along the learned PMOT flow.
    The leftmost panel is the source image and the rightmost panel is the target
    style image.
    }
    \label{fig:color_transform_transfer_montage}
\end{figure}


\section{Related Work}
\label{sec:related_work}

\paragraph{CNFs and optimal-transport dynamics.}
Continuous Normalizing Flows (CNFs) define invertible transformations through neural ordinary differential equations and evaluate likelihoods using the instantaneous change-of-variables formula \citep{chen2018neural,grathwohl2018ffjord}. This makes CNFs natural candidates for learning continuous transport maps, but likelihood maximization alone does not identify a unique path between the data distribution and the reference prior. OT-Flow addresses this ambiguity by combining a potential-flow CNF with kinetic-energy and HJB regularization, yielding dynamics closely related to the Benamou--Brenier formulation of quadratic-cost optimal transport \citep{benamou2000computational,onken2021otflow, jing2025convergence}. Recent work has also used machine learning to approximate geodesic dynamics under related transport geometries, such as the spherical
Wasserstein--Fisher--Rao metric \citep{jing2024machine}. PMOT likewise learns
continuous transport dynamics, but focuses on likelihood-based potential flows
for general $p$-cost Wasserstein geometry \citep{villani2009optimal,peyre2019computational}. It parameterizes the CNF velocity in the generalized Benamou--Brenier form for
the chosen exponent $p$ and replaces the quadratic kinetic-energy regularizer
with a self-induced potential-matching objective .

\paragraph{Flow matching and neural OT with general costs.}
Flow matching trains continuous-time generative models by regressing velocity fields along prescribed probability paths, avoiding the need to backpropagate through ODE solves during training \citep{lipman2023flowmatching}. OT-coupled flow-matching methods use minibatch OT couplings to define training pairs or conditional paths \citep{tong2023improving}. PMOT instead uses a potential matching residual whose endpoint pair is induced by the current CNF map, so no external OT coupling is used during training. Neural OT methods also learn maps or couplings under general cost functions, including costs beyond the quadratic case \citep{korotin2021neuralot,asadulaev2024neuralgeneralcost}. PMOT is complementary to this literature: rather than learning only a static map or using minibatch OT pairs as supervision, it learns continuous potential-flow dynamics while preserving CNF likelihood evaluation and allowing flexible terminal matching such as KL/NLL or MMD.

\section{Conclusion}
\label{sec:conclusion}

We presented Potential Matching Optimal Transport (PMOT), a potential-flow CNF
framework for general $p$-cost optimal transport. PMOT parameterizes the CNF
velocity field with a scalar potential in the generalized Benamou--Brenier
form and trains the potential gradient using a self-induced matching loss on
straight bridges determined by the model endpoints. It requires no external
transport pairs or inner OT optimization and supports both likelihood-based
training through a KL/NLL terminal objective and sample-based terminal matching
through MMD.

Our main result establishes exactness at zero population PMOT loss: under the
stated realizability, regularity, and full-support assumptions, every global
minimizer recovers the unique $p$-optimal Monge map and its
Benamou--Brenier dynamics. On synthetic benchmarks, the learned maps agree
with $p$-matched OT references and are closest to the references computed
using their respective training exponents. PMOT also remains competitive as a
likelihood-based density model on high-dimensional tabular data, while an
additional MMD-based color transformation experiment demonstrates
sample-based terminal matching. As with deterministic CNFs more broadly, PMOT
represents Monge-type maps, and finite-sample training, numerical integration,
and optimization introduce practical approximation errors.

\bibliography{iclr2026_conference}

@article{benamou2000computational,
  title={A Computational Fluid Mechanics Solution to the Monge-Kantorovich Mass Transfer Problem},
  author={Benamou, Jean-David and Brenier, Yann},
  journal={Numerische Mathematik},
  volume={84},
  number={3},
  pages={375--393},
  year={2000},
  publisher={Springer},
  doi={10.1007/s002110050002}
}

@book{villani2009optimal,
  title={Optimal transport: old and new},
  author={Villani, C{\'e}dric and others},
  volume={338},
  year={2009},
  publisher={Springer}
}

@article{peyre2019computational,
  title={Computational Optimal Transport},
  author={Peyr{\'e}, Gabriel and Cuturi, Marco},
  journal={Foundations and Trends in Machine Learning},
  volume={11},
  number={5--6},
  pages={355--607},
  year={2019},
  publisher={Now Publishers},
  doi={10.1561/2200000073}
}

@book{santambrogio2015optimal,
  title={Optimal Transport for Applied Mathematicians: Calculus of Variations, PDEs, and Modeling},
  author={Santambrogio, Filippo},
  series={Progress in Nonlinear Differential Equations and Their Applications},
  volume={87},
  publisher={Birkh{\"a}user Cham},
  year={2015},
  doi={10.1007/978-3-319-20828-2}
}

@article{chen2018neural,
  title={Neural ordinary differential equations},
  author={Chen, Ricky TQ and Rubanova, Yulia and Bettencourt, Jesse and Duvenaud, David K},
  journal={Advances in neural information processing systems},
  volume={31},
  year={2018}
}

@article{grathwohl2018ffjord,
  title={Ffjord: Free-form continuous dynamics for scalable reversible generative models},
  author={Grathwohl, Will and Chen, Ricky TQ and Bettencourt, Jesse and Sutskever, Ilya and Duvenaud, David},
  journal={arXiv preprint arXiv:1810.01367},
  year={2018}
}

@inproceedings{onken2021otflow,
  title={{OT-Flow}: Fast and Accurate Continuous Normalizing Flows via Optimal Transport},
  author={Onken, Derek and Wu Fung, Samy and Li, Xingjian and Ruthotto, Lars},
  booktitle={Proceedings of the AAAI Conference on Artificial Intelligence},
  volume={35},
  pages={9223--9232},
  year={2021},
  doi={10.1609/aaai.v35i10.17113},
  url={https://ojs.aaai.org/index.php/AAAI/article/view/17113}
}

@inproceedings{lipman2023flowmatching,
  title={Flow Matching for Generative Modeling},
  author={Lipman, Yaron and Chen, Ricky T. Q. and Ben-Hamu, Heli and Nickel, Maximilian and Le, Matt},
  booktitle={International Conference on Learning Representations},
  year={2023},
  url={https://openreview.net/forum?id=PqvMRDCJT9t}
}

@article{gretton2012kernel,
  title={A Kernel Two-Sample Test},
  author={Gretton, Arthur and Borgwardt, Karsten M. and Rasch, Malte J. and Sch{\"o}lkopf, Bernhard and Smola, Alexander},
  journal={Journal of Machine Learning Research},
  volume={13},
  number={25},
  pages={723--773},
  year={2012},
  url={https://www.jmlr.org/papers/v13/gretton12a.html}
}

@inproceedings{cuturi2013sinkhorn,
  title={Sinkhorn Distances: Lightspeed Computation of Optimal Transport},
  author={Cuturi, Marco},
  booktitle={Advances in Neural Information Processing Systems},
  year={2013}
}

@inproceedings{papamakarios2017masked,
  title={Masked Autoregressive Flow for Density Estimation},
  author={Papamakarios, George and Pavlakou, Theo and Murray, Iain},
  booktitle={Advances in Neural Information Processing Systems},
  year={2017}
}

@inproceedings{korotin2021neuralot,
  title={Neural Optimal Transport},
  author={Korotin, Alexander and Li, Lingxiao and Genevay, Aude and Solomon, Justin M. and Filippov, Alexander and Burnaev, Evgeny},
  booktitle={International Conference on Learning Representations},
  year={2023},
  url={https://openreview.net/forum?id=d8CBRlWNkqH}
}

@inproceedings{asadulaev2024neuralgeneralcost,
  title={Neural Optimal Transport with General Cost Functionals},
  author={Asadulaev, Arip and Korotin, Alexander and Egiazarian, Vage and Mokrov, Petr and Burnaev, Evgeny},
  booktitle={International Conference on Learning Representations},
  year={2024},
  url={https://openreview.net/forum?id=gIiz7tBtYZ}
}

@article{tong2023improving,
  title={Improving and generalizing flow-based generative models with minibatch optimal transport},
  author={Tong, Alexander and Fatras, Kilian and Malkin, Nikolay and Huguet, Guillaume and Zhang, Yanlei and Rector-Brooks, Jarrid and Wolf, Guy and Bengio, Yoshua},
  journal={arXiv preprint arXiv:2302.00482},
  year={2023}
}

@INPROCEEDINGS{deng2009imagenet,
  author={Deng, Jia and Dong, Wei and Socher, Richard and Li, Li-Jia and Kai Li and Li Fei-Fei},
  booktitle={2009 IEEE Conference on Computer Vision and Pattern Recognition}, 
  title={ImageNet: A large-scale hierarchical image database}, 
  year={2009},
  volume={},
  number={},
  pages={248-255},
  doi={10.1109/CVPR.2009.5206848}}

@article{russakovsky2015imagenet,
Author = {Olga Russakovsky and Jia Deng and Hao Su and Jonathan Krause and Sanjeev Satheesh and Sean Ma and Zhiheng Huang and Andrej Karpathy and Aditya Khosla and Michael Bernstein and Alexander C. Berg and Li Fei-Fei},
Title = {{ImageNet Large Scale Visual Recognition Challenge}},
Year = {2015},
journal   = {International Journal of Computer Vision (IJCV)},
doi = {10.1007/s11263-015-0816-y},
volume={115},
number={3},
pages={211-252}
}

@article{jing2024machine,
  title={A machine learning framework for geodesics under spherical Wasserstein--Fisher--Rao metric and its application for weighted sample generation},
  author={Jing, Yang and Chen, Jiaheng and Li, Lei and Lu, Jianfeng},
  journal={Journal of Scientific Computing},
  volume={98},
  number={1},
  pages={5},
  year={2024},
  publisher={Springer}
}

@article{jing2025convergence,
  title={Convergence Analysis of OT-Flow for Sample Generation},
  author={Jing, Yang and Li, Lei},
  journal={Numerical Mathematics: Theory, Methods and Applications},
  volume={18},
  number={2},
  pages={325--352},
  year={2025}
}
\bibliographystyle{iclr2026_conference}

\appendix

\section{Details for Theorem~\ref{thm:pmot_exactness}}
\label{app:pmot_proof_details}

\subsection{Proof of Theorem~\ref{thm:pmot_exactness}}
\begin{proof}
(1) By assumption, $(\rho^\star,v^\star)=(\rho^{\Phi^\star},v_{\Phi^\star})$ is a solution of the $p$-Benamou--Brenier problem for some $\Phi^\star\in\operatorname{dom}(\mathcal L_{\mathrm{PMOT}})$.

The endpoint condition of the Benamou--Brenier problem and the definition of the induced flow give
\[
(F_{\Phi^\star})_\sharp\mu_0
=\rho_1^{\Phi^\star}
=\rho_1^\star
=\mu_1.
\]
Since $(\rho^{\Phi^\star},v_{\Phi^\star})$ minimizes the Benamou--Brenier action, the pushforward formula and the characteristic equation yield
\[
\begin{aligned}
\frac{1}{p}W_p^p(\mu_0,\mu_1)
&=\frac{1}{p}\int_0^1\int_{\mathbb R^d}
\lVert v_{\Phi^\star}(t,y)\rVert^p\,
\rho_t^{\Phi^\star}(\mathrm dy)\,\mathrm dt\\
&=\frac{1}{p}\int_{\mathbb R^d}\int_0^1
\left\lVert\partial_tz_{\Phi^\star}(t,x)\right\rVert^p
\,\mathrm dt\,\mu_0(\mathrm dx)\\
&\geq\frac{1}{p}\int_{\mathbb R^d}
\left\lVert F_{\Phi^\star}(x)-x\right\rVert^p
\,\mu_0(\mathrm dx)\\
&\geq\frac{1}{p}W_p^p(\mu_0,\mu_1).
\end{aligned}
\]
The first inequality is Jensen's inequality applied to $\mu_0$-almost every trajectory of finite $p$-energy, using
\[
F_{\Phi^\star}(x)-x
=\int_0^1\partial_tz_{\Phi^\star}(t,x)\,\mathrm dt,
\]
and the second follows because $(\operatorname{Id},F_{\Phi^\star})_\sharp\mu_0\in\Pi(\mu_0,\mu_1)$. Since the leftmost and rightmost quantities in the preceding chain coincide, equality holds throughout. In particular, $(\operatorname{Id},F_{\Phi^\star})_\sharp\mu_0$ is an optimal coupling. Since $\mu_0$ is absolutely continuous and $p>1$, the optimal coupling is induced by the unique optimal Monge map $T_p$, and therefore
\[
F_{\Phi^\star}=T_p
\qquad
\mu_0\text{-almost everywhere}.
\]
Moreover, the nonnegative gap in Jensen's inequality has zero integral with respect to $\mu_0$. Thus equality in Jensen's inequality holds for $\mu_0$-almost every $x$. Since $w\mapsto\lVert w\rVert^p$ is strictly convex for $p>1$, this implies
\[
\partial_tz_{\Phi^\star}(t,x)
=F_{\Phi^\star}(x)-x
\qquad
\text{for almost every }t\in(0,1)
\]
for $\mu_0$-almost every $x$. Since $z_{\Phi^\star}(\cdot,x)$ is absolutely continuous and $z_{\Phi^\star}(0,x)=x$, integration in time gives
\[
z_{\Phi^\star}(t,x)
=(1-t)x+tF_{\Phi^\star}(x)
=(1-t)x+tT_p(x)
\]
for every $t\in[0,1]$ and for $\mu_0$-almost every $x$. Combining this identity with the characteristic equation gives
\[
v_{\Phi^\star}\bigl(t,(1-t)x+tF_{\Phi^\star}(x)\bigr)
=F_{\Phi^\star}(x)-x
\qquad
\mathrm dt\otimes\mu_0\text{-almost everywhere}.
\]
For $r>1$, let $A_r(\xi):=\lVert\xi\rVert^{r-2}\xi$, with $A_r(0):=0$. The duality maps $A_p$ and $A_q$ are odd and inverse to each other. Since $v_{\Phi^\star}=-A_q(\nabla_x\Phi^\star)$, the preceding velocity identity is equivalent to
\[
\nabla_x\Phi^\star\bigl(t,(1-t)x+tF_{\Phi^\star}(x)\bigr)
=-\lVert F_{\Phi^\star}(x)-x\rVert^{p-2}
\bigl(F_{\Phi^\star}(x)-x\bigr)
\qquad
\mathrm dt\otimes\mu_0\text{-almost everywhere}.
\]
Thus the potential-matching residual vanishes and $\mathcal L_{\mathrm{PM}}(\Phi^\star)=0$. The endpoint identity above also gives $\mathcal L_{\mathrm{term}}(\Phi^\star)=0$. Since the two terms in the PMOT objective are nonnegative and have positive weights,
\[
\min_{\Phi\in\operatorname{dom}(\mathcal L_{\mathrm{PMOT}})}
\mathcal L_{\mathrm{PMOT}}(\Phi)
=
\mathcal L_{\mathrm{PMOT}}(\Phi^\star)
=
0.
\]

(2) Now let $\Phi$ be a global minimizer satisfying the regularity and full-support assumptions of the theorem. By part~(1), the minimum value of the PMOT objective is zero. Hence
\[
\mathcal L_{\mathrm{PM}}(\Phi)=\mathcal L_{\mathrm{term}}(\Phi)=0,
\]
and in particular, $(F_\Phi)_\#\mu_0=\mu_1$.


Define
\[
S_t(x):=(1-t)x+tF_\Phi(x)
\]
and the potential-matching residual
\[
R(t,x):=
\nabla_x\Phi\bigl(t,S_t(x)\bigr)
+\lVert F_\Phi(x)-x\rVert^{p-2}\bigl(F_\Phi(x)-x\bigr),
\qquad (t,x)\in(0,1)\times\mathbb R^d.
\]
Since the solution map \(z_\Phi\) is jointly Borel measurable, its terminal-time slice \(F_\Phi=z_\Phi(1,\cdot)\) is Borel measurable. Consequently, the map \((t,x)\mapsto S_t(x)\) is jointly Borel measurable. Moreover, \(\nabla_x\Phi\) is continuous on \((0,1)\times\mathbb R^d\), and the map \(\xi\mapsto\lVert\xi\rVert^{p-2}\xi\), with value \(0\) at \(\xi=0\), is continuous for \(p>1\). It follows that \(R\) is jointly Borel measurable.

The function \((t,x)\mapsto\lVert R(t,x)\rVert^2\) is therefore nonnegative and jointly Borel measurable. By Tonelli's theorem, the function
\[
h(x):=\int_0^1\lVert R(t,x)\rVert^2\,\mathrm dt
\]
is measurable. Since \(\mathcal L_{\mathrm{PM}}(\Phi)=0\),
\[
0
=
\int_{\mathbb R^d}h(x)\,\mu_0(\mathrm dx).
\]
Because \(h\geq0\), it follows that \(h=0\) for \(\mu_0\)-almost every \(x\). Hence the measurable set
\[
E:=\{x\in\mathbb R^d:h(x)=0\}
\]
satisfies \(\mu_0(E)=1\). For every \(x\in E\), we have
\[
\int_0^1\lVert R(t,x)\rVert^2\,\mathrm dt=0,
\]
and therefore
\[
R(t,x)=0
\qquad
\text{for almost every }t\in(0,1).
\]
For each fixed \(x\in E\), the map \(t\mapsto R(t,x)\) is continuous on \((0,1)\). Since it vanishes for almost every \(t\in(0,1)\), it must vanish for every \(t\in(0,1)\).

Thus, for every \(x\in E\) and \(t\in(0,1)\),
\[
\nabla_x\Phi\bigl(t,S_t(x)\bigr)
=-A_p\bigl(F_\Phi(x)-x\bigr).
\]
Using \(v_\Phi=-A_q(\nabla_x\Phi)\), together with the oddness and inverse relation of the duality maps defined in part~(1), gives
\[
v_\Phi\bigl(t,S_t(x)\bigr)
=-A_q\bigl(-A_p(F_\Phi(x)-x)\bigr)
=F_\Phi(x)-x.
\]

Since
\[
\partial_tS_t(x)=F_\Phi(x)-x,
\]
it follows that, for every $x\in E$,
\[
\partial_tS_t(x)=v_\Phi\bigl(t,S_t(x)\bigr),
\qquad t\in(0,1),
\qquad S_0(x)=x.
\]
Since $S_\cdot(x)$ is affine in time, it is absolutely continuous on $[0,1]$ and satisfies the characteristic equation on $(0,1)$. Hence $S_\cdot(x)$ solves problem~\eqref{eq:pmot_flow}, and uniqueness gives
\[
z_\Phi(t,x)=S_t(x)
\qquad
\text{for every }t\in[0,1]\text{ and every }x\in E.
\]
In particular,
\[
v_\Phi\bigl(t,z_\Phi(t,x)\bigr)=F_\Phi(x)-x
\qquad
\text{for every }t\in(0,1)\text{ and every }x\in E.
\]
Thus, for $\mu_0$-almost every initial point, the induced trajectory is a constant-speed straight line and the velocity is constant along it.

It remains to show that the pair $(\rho_t^\Phi,v_\Phi)$ induced by these straight-line trajectories is a minimizer of the $p$-Benamou--Brenier problem. We first prove that $F_\Phi$ is the optimal map for the static $p$-cost transport problem and then verify that the induced straight-line flow attains the minimum Benamou--Brenier action. Set
\[
Q:=(0,1)\times\mathbb R^d,
\qquad
q=\frac{p}{p-1},
\qquad
H_q(\xi)=\frac{1}{q}\lVert\xi\rVert^q,
\qquad
A_q(\xi)=\nabla H_q(\xi)=\lVert\xi\rVert^{q-2}\xi,
\]
and write $g=\nabla_x\Phi$, so that $v_\Phi=-A_q(g)$. Define
\[
G(t,x):=\partial_t\Phi(t,x)-H_q(g(t,x)),
\qquad (t,x)\in Q.
\]
We claim that $\nabla_xG=0$ on $Q$. Since $\Phi\in C^2(Q)$, the field $g=\nabla_x\Phi$ is $C^1$ on $Q$. Moreover, for every $q>1$, the map $A_q(\xi)=\lVert\xi\rVert^{q-2}\xi$ is $C^1$ on $\mathbb R^d\setminus\{0\}$, with
\[
DA_q(\xi)=\lVert\xi\rVert^{q-2}I+(q-2)\lVert\xi\rVert^{q-4}\xi\otimes\xi.
\]
Since $g$ is continuous, the set
\[
U:=\{(t,x)\in Q:g(t,x)\neq0\}
\]
is open. Therefore, $v_\Phi=-A_q\circ g$ is $C^1$ on $U$ by the chain rule. For every $x\in E$, the velocity is constant along the corresponding characteristic:
\[
v_\Phi\bigl(t,z_\Phi(t,x)\bigr)=F_\Phi(x)-x,
\qquad t\in(0,1).
\]
By the straight-line identity, $z_\Phi(\cdot,x)=S_\cdot(x)$ is affine and therefore $C^1$ on $(0,1)$. Since $U$ is open and $z_\Phi(\cdot,x)$ is continuous, whenever $x\in E$ and $(t,z_\Phi(t,x))\in U$, the trajectory remains in $U$ on a neighborhood of $t$. The $C^1$ chain rule therefore applies on this neighborhood, and differentiating the preceding identity with respect to $t$ and using $\partial_tz_\Phi(t,x)=v_\Phi(t,z_\Phi(t,x))$ gives
\[
\begin{aligned}
0
&=\frac{\mathrm d}{\mathrm dt}v_\Phi\bigl(t,z_\Phi(t,x)\bigr)\\
&=\partial_tv_\Phi\bigl(t,z_\Phi(t,x)\bigr)+D_xv_\Phi\bigl(t,z_\Phi(t,x)\bigr)\,\partial_tz_\Phi(t,x)\\
&=\bigl[\partial_tv_\Phi+D_xv_\Phi\,v_\Phi\bigr]\bigl(t,z_\Phi(t,x)\bigr).
\end{aligned}
\]
Define the space--time measure as the pushforward
\[
\nu
:=
\bigl((t,x)\mapsto(t,z_\Phi(t,x))\bigr)_\#
\bigl(\mathrm dt\otimes\mu_0\bigr).
\]
The joint Borel measurability of $z_\Phi$ ensures that this measure is well defined. Since $\rho_t^\Phi=(z_\Phi(t,\cdot))_\#\mu_0$, it can equivalently be written as
\[
\nu(\mathrm dt,\mathrm dy)=\mathrm dt\,\rho_t^\Phi(\mathrm dy).
\]
Since $\mu_0(E)=1$, the preceding identity implies
\[
\partial_tv_\Phi+D_xv_\Phi\,v_\Phi=0
\qquad \nu\text{-almost everywhere on }U.
\]
The assumption $\operatorname{supp}(\rho_t^\Phi)=\mathbb R^d$ for every $t\in(0,1)$ implies that $\nu$ has full support on $Q$. Indeed, every nonempty open subset of $Q$ contains a product $I\times B$, where $I\subset(0,1)$ and $B\subset\mathbb R^d$ are nonempty and open, and
\[
\nu(I\times B)=\int_I\rho_t^\Phi(B)\,\mathrm dt>0.
\]
Since the map
\[
(t,x)\longmapsto
\partial_tv_\Phi(t,x)+D_xv_\Phi(t,x)v_\Phi(t,x)
\]
is continuous on $U$, it follows that
\[
(\partial_t+v_\Phi\cdot\nabla_x)v_\Phi
=\partial_tv_\Phi+D_xv_\Phi\,v_\Phi
=0
\qquad\text{on }U.
\]
To relate the material derivative of $v_\Phi$ to $\nabla_xG$, note that $D_xg=D_x^2\Phi$, $DA_q=D^2H_q$, and $(v_\Phi\cdot\nabla_x)v_\Phi=D_xv_\Phi\,v_\Phi$. Hence, the chain rule yields
\[
\begin{aligned}
(\partial_t+v_\Phi\cdot\nabla_x)v_\Phi
&=\partial_tv_\Phi+D_xv_\Phi\,v_\Phi\\
&=-DA_q(g)\,\partial_tg-DA_q(g)D_xg\,v_\Phi\\
&=-D^2H_q(g)\bigl(\partial_tg+D_x^2\Phi\,v_\Phi\bigr).
\end{aligned}
\]
On the other hand, using the symmetry of $D_x^2\Phi$ and $v_\Phi=-A_q(g)$,
\[
\begin{aligned}
\nabla_xG
&=\partial_tg-\nabla_x\bigl(H_q(g)\bigr)\\
&=\partial_tg-(D_xg)^\top A_q(g)\\
&=\partial_tg-D_x^2\Phi\,A_q(g)\\
&=\partial_tg+D_x^2\Phi\,v_\Phi.
\end{aligned}
\]
Combining the preceding two identities yields
\[
0=(\partial_t+v_\Phi\cdot\nabla_x)v_\Phi=-D^2H_q(g)\nabla_xG
\qquad\text{on }U.
\]
For $g\neq0$,
\[
D^2H_q(g)=\lVert g\rVert^{q-2}I+(q-2)\lVert g\rVert^{q-4}g\otimes g
\]
is positive definite. Indeed, for any $\eta\in\mathbb R^d$, define its components parallel and orthogonal to $g$ by
\[
\eta_\parallel:=\frac{\eta\cdot g}{\lVert g\rVert^2}g,
\qquad
\eta_\perp:=\eta-\eta_\parallel.
\]
Then $\eta=\eta_\perp+\eta_\parallel$, $\eta_\perp\cdot g=0$, and $(g\cdot\eta)^2=\lVert g\rVert^2\lVert\eta_\parallel\rVert^2$. Hence
\[
\begin{aligned}
\eta^\top D^2H_q(g)\eta
&=\lVert g\rVert^{q-2}\lVert\eta\rVert^2+(q-2)\lVert g\rVert^{q-4}(g\cdot\eta)^2\\
&=\lVert g\rVert^{q-2}\bigl(\lVert\eta_\perp\rVert^2+\lVert\eta_\parallel\rVert^2+(q-2)\lVert\eta_\parallel\rVert^2\bigr)\\
&=\lVert g\rVert^{q-2}\bigl(\lVert\eta_\perp\rVert^2+(q-1)\lVert\eta_\parallel\rVert^2\bigr)>0
\end{aligned}
\]
for every $\eta\neq0$, since $g\neq0$ and $q>1$. Equivalently, the tangential eigenvalue is $\lVert g\rVert^{q-2}$ and the radial eigenvalue is $(q-1)\lVert g\rVert^{q-2}$. Therefore $\nabla_xG=0$ on $U$.

It remains to consider the zero set
\[
Z:=\{(t,x)\in Q:g(t,x)=0\}.
\]
At every interior point of $Z$ relative to $Q$, the field $g$ vanishes in a neighborhood, so $\partial_tg=0$ there and hence $\nabla_xG=0$. Every other point of $Z$ is a limit point of $U$, and the continuity on $Q$ of
\[
\nabla_xG=\partial_tg-D_x^2\Phi\,A_q(g)
\]
again yields $\nabla_xG=0$. Thus $\nabla_xG=0$ on $Z$ as well. Together with the conclusion on $U$, this yields
\[
\nabla_xG=0
\qquad\text{on }Q.
\]

Since $\mathbb R^d$ is connected, there is a continuous function $a:(0,1)\to\mathbb R$ such that
\[
G(t,x)=a(t)
\qquad (t,x)\in Q.
\]
Define
\[
\alpha(t):=\int_{1/2}^t a(s)\,\mathrm ds,
\qquad t\in(0,1).
\]
Then $\alpha'(t)=a(t)$ on $(0,1)$. The gauge-adjusted potential
\[
\widetilde\Phi(t,x):=\Phi(t,x)-\alpha(t)
\]
induces the same velocity on $(0,1)\times \mathbb R ^d$, and satisfies the Hamilton--Jacobi equation
\[
\partial_t\widetilde\Phi=H_q(\nabla_x\widetilde\Phi)
\qquad\text{on }Q.
\]
Let $c_p(x,y)=\lVert x-y\rVert^p/p$. Fenchel's inequality, applied to the convex conjugate pair $L_p(w)=\lVert w\rVert^p/p$ and $H_q(\xi)=\lVert\xi\rVert^q/q$, reads
\[
L_p(w)+H_q(\xi)+w\cdot\xi\geq0,
\]
with equality if and only if $w=-A_q(\xi)$. Fix $0<\varepsilon<1/2$. For every $x\in E$,
\[
F_\Phi(x)-x
=-A_q\bigl(\nabla_x\widetilde\Phi(t,S_t(x))\bigr),
\qquad t\in(0,1),
\]
so equality holds in Fenchel's inequality along the straight-line trajectory $S_t(x)$. Since $\partial_tS_t(x)=F_\Phi(x)-x$, the chain rule and the Hamilton--Jacobi equation give
\[
\begin{aligned}
\frac{\mathrm d}{\mathrm dt}\widetilde\Phi\bigl(t,S_t(x)\bigr)
&=
\partial_t\widetilde\Phi\bigl(t,S_t(x)\bigr)
+
\nabla_x\widetilde\Phi\bigl(t,S_t(x)\bigr)
\cdot\partial_tS_t(x)\\
&=
H_q\bigl(\nabla_x\widetilde\Phi(t,S_t(x))\bigr)
+
\nabla_x\widetilde\Phi\bigl(t,S_t(x)\bigr)
\cdot\bigl(F_\Phi(x)-x\bigr)\\
&=
-L_p\bigl(F_\Phi(x)-x\bigr)\\
&=
-c_p\bigl(x,F_\Phi(x)\bigr).
\end{aligned}
\]
Therefore, the fundamental theorem of calculus on $[\varepsilon,1-\varepsilon]$ yields
\[
\begin{aligned}
&\widetilde\Phi\bigl(1-\varepsilon,S_{1-\varepsilon}(x)\bigr)
-\widetilde\Phi\bigl(\varepsilon,S_\varepsilon(x)\bigr)\\
&\qquad=
\int_\varepsilon^{1-\varepsilon}
\frac{\mathrm d}{\mathrm dt}
\widetilde\Phi\bigl(t,S_t(x)\bigr)\,\mathrm dt
=-(1-2\varepsilon)c_p\bigl(x,F_\Phi(x)\bigr).
\end{aligned}
\]
Equivalently,
\[
\widetilde\Phi\bigl(\varepsilon,S_\varepsilon(x)\bigr)
-\widetilde\Phi\bigl(1-\varepsilon,S_{1-\varepsilon}(x)\bigr)
=
(1-2\varepsilon)c_p\bigl(x,F_\Phi(x)\bigr)
=
\frac{
\lVert S_{1-\varepsilon}(x)-S_\varepsilon(x)\rVert^p
}{p(1-2\varepsilon)^{p-1}}.
\]

Take finitely many points $x_1,\ldots,x_N\in E$, set
$y_i:=F_\Phi(x_i)$, and let $\sigma$ be any permutation of
$\{1,\ldots,N\}$. For each $i$, define the constant-speed line
\[
\gamma_i(t)
:=
S_\varepsilon(x_i)
+
\frac{t-\varepsilon}{1-2\varepsilon}
\left(
S_{1-\varepsilon}(x_{\sigma(i)})
-
S_\varepsilon(x_i)
\right),
\qquad
t\in[\varepsilon,1-\varepsilon].
\]
Its velocity is
\[
\partial_t\gamma_i(t)
=
\frac{
S_{1-\varepsilon}(x_{\sigma(i)})
-
S_\varepsilon(x_i)
}{
1-2\varepsilon
}.
\]
By the chain rule, the Hamilton--Jacobi equation, and Fenchel's
inequality,
\[
\begin{aligned}
\frac{\mathrm d}{\mathrm dt}
\widetilde\Phi\bigl(t,\gamma_i(t)\bigr)
&=
H_q\bigl(\nabla_x\widetilde\Phi(t,\gamma_i(t))\bigr)
+
\nabla_x\widetilde\Phi(t,\gamma_i(t))
\cdot\partial_t\gamma_i(t)\geq
-L_p\bigl(\partial_t\gamma_i(t)\bigr).
\end{aligned}
\]
Integrating over $[\varepsilon,1-\varepsilon]$ and using the fact that
$\partial_t\gamma_i$ is constant gives
\[
\begin{aligned}
&\widetilde\Phi\bigl(\varepsilon,S_\varepsilon(x_i)\bigr)
-\widetilde\Phi\bigl(
1-\varepsilon,S_{1-\varepsilon}(x_{\sigma(i)})
\bigr)\leq
(1-2\varepsilon)
L_p\bigl(\partial_t\gamma_i\bigr)=
\frac{
\lVert
S_{1-\varepsilon}(x_{\sigma(i)})
-
S_\varepsilon(x_i)
\rVert^p
}{
p(1-2\varepsilon)^{p-1}
}.
\end{aligned}
\]
The preceding equality, invariance of the sum of the terminal potential values under permutation, and these cross-pair inequalities give
\[
\begin{aligned}
\sum_{i=1}^N
\frac{
\lVert S_{1-\varepsilon}(x_i)-S_\varepsilon(x_i)\rVert^p
}{p(1-2\varepsilon)^{p-1}}
&=
\sum_{i=1}^N
\left[
\widetilde\Phi\bigl(\varepsilon,S_\varepsilon(x_i)\bigr)
-\widetilde\Phi\bigl(1-\varepsilon,S_{1-\varepsilon}(x_i)\bigr)
\right]\\
&=
\sum_{i=1}^N
\left[
\widetilde\Phi\bigl(\varepsilon,S_\varepsilon(x_i)\bigr)
-\widetilde\Phi\bigl(1-\varepsilon,S_{1-\varepsilon}(x_{\sigma(i)})\bigr)
\right]\\
&\leq
\sum_{i=1}^N
\frac{
\lVert S_{1-\varepsilon}(x_{\sigma(i)})-S_\varepsilon(x_i)\rVert^p
}{p(1-2\varepsilon)^{p-1}}.
\end{aligned}
\]
As $\varepsilon\to0^+$,
\[
S_\varepsilon(x_i)\longrightarrow x_i,
\qquad
S_{1-\varepsilon}(x_i)\longrightarrow y_i.
\]
Taking the limit in the preceding finite-sum inequality yields
\[
\sum_{i=1}^N c_p(x_i,y_i)
\leq
\sum_{i=1}^N c_p(x_i,y_{\sigma(i)}).
\]
Therefore,
\[
\Gamma:=\left\{(x,F_\Phi(x)):x\in E\right\}
\]
is $c_p$-cyclically monotone. Since $\mathcal L_{\mathrm{term}}(\Phi)=0$, the measure
\[
\pi_\Phi:=(\operatorname{Id},F_\Phi)_\#\mu_0
\]
belongs to $\Pi(\mu_0,\mu_1)$ and is concentrated on $\Gamma$. Moreover,
\[
\begin{aligned}
\int_{\mathbb R^d\times\mathbb R^d}c_p(x,y)\,\pi_\Phi(\mathrm dx,\mathrm dy)
&=\frac{1}{p}\int_{\mathbb R^d}\lVert F_\Phi(x)-x\rVert^p\,\mu_0(\mathrm dx)\\
&\leq\frac{2^{p-1}}{p}\left(\int_{\mathbb R^d}\lVert x\rVert^p\,\mu_0(\mathrm dx)+\int_{\mathbb R^d}\lVert y\rVert^p\,\mu_1(\mathrm dy)\right)<\infty.
\end{aligned}
\]
Since $c_p$ is continuous and finite-valued on $\mathbb R^d\times\mathbb R^d$, and $\pi_\Phi$ has finite $c_p$-cost, the standard sufficiency theorem for $c_p$-cyclically monotone transport plans implies that $\pi_\Phi$ is an optimal coupling \citep{santambrogio2015optimal}. The factor $1/p$ does not affect the optimizer, so $F_\Phi$ is optimal for the cost $\lVert x-y\rVert^p$.

Since $\mu_0$ is absolutely continuous and $p>1$, the optimal transport map for this cost is unique $\mu_0$-almost everywhere. Consequently,
\[
F_\Phi=T_p
\qquad\mu_0\text{-almost everywhere}.
\]
We now verify the dynamic optimality explicitly. Since $\rho_t^\Phi=(z_\Phi(t,\cdot))_\#\mu_0$, the pushforward formula, the straight-line identity, and $F_\Phi=T_p$ $\mu_0$-almost everywhere give
\[
\begin{aligned}
\frac{1}{p}\int_0^1\int_{\mathbb R^d}\lVert v_\Phi(t,y)\rVert^p\,\rho_t^\Phi(\mathrm dy)\,\mathrm dt
&=\frac{1}{p}\int_0^1\int_{\mathbb R^d}\left\lVert v_\Phi\bigl(t,z_\Phi(t,x)\bigr)\right\rVert^p\,\mu_0(\mathrm dx)\,\mathrm dt\\
&=\frac{1}{p}\int_0^1\int_{\mathbb R^d}\lVert T_p(x)-x\rVert^p\,\mu_0(\mathrm dx)\,\mathrm dt\\
&=\frac{1}{p}\int_{\mathbb R^d}\lVert T_p(x)-x\rVert^p\,\mu_0(\mathrm dx)\\
&=\frac{1}{p}W_p^p(\mu_0,\mu_1).
\end{aligned}
\]
The generalized Benamou--Brenier formula states that $W_p^p(\mu_0,\mu_1)/p$ is the minimum of the dynamic action. Since each absolutely continuous trajectory $z_\Phi(\cdot,x)$ satisfies the characteristic equation almost everywhere in time and $\rho_t^\Phi=(z_\Phi(t,\cdot))_\#\mu_0$, the standard characteristic-flow argument implies that $(\rho_t^\Phi,v_\Phi)$ satisfies the continuity equation in the distributional sense. Together with the endpoint conditions $\rho_0^\Phi=\mu_0$ and $\rho_1^\Phi=\mu_1$, the preceding action identity therefore shows that $(\rho_t^\Phi,v_\Phi)$ is a minimizer of the $p$-Benamou--Brenier problem.

Finally, for $\mu_0$-almost every $x$,
\[
z_\Phi(t,x)=(1-t)x+tT_p(x)
\qquad\text{for every }t\in[0,1],
\]
and
\[
v_\Phi\bigl(t,z_\Phi(t,x)\bigr)=T_p(x)-x
\qquad\text{for every }t\in(0,1).
\]

The Benamou--Brenier solution \((\rho^\star,v^\star)\) from part~(1) is induced by the same optimal map \(T_p\), and both velocity fields equal \(T_p(x)-x\) along the common trajectory \((1-t)x+tT_p(x)\), \(\mathrm dt\otimes\mu_0\)-almost everywhere. Since
\[
\rho_t^\Phi
=
\bigl((1-t)\operatorname{Id}+tT_p\bigr)_\sharp\mu_0
=
\rho_t^\star,
\]
the pushforward formula gives
\[
\begin{aligned}
&\int_0^1\int_{\mathbb R^d}
\left\lVert v_\Phi(t,y)-v^\star(t,y)\right\rVert^p
\,\rho_t^\Phi(\mathrm dy)\,\mathrm dt\\
=&
\int_0^1\int_{\mathbb R^d}
\left\lVert
v_\Phi\bigl(t,(1-t)x+tT_p(x)\bigr)
-
v^\star\bigl(t,(1-t)x+tT_p(x)\bigr)
\right\rVert^p
\,\mu_0(\mathrm dx)\,\mathrm dt\\
=&0.
\end{aligned}
\]
Since the integrand on the left-hand side is nonnegative, it follows that
\[
v_\Phi=v^\star
\qquad
\mathrm dt\,\rho_t^\Phi(\mathrm dy)\text{-almost everywhere}.
\]

This proves both exact recovery and velocity-field uniqueness.
\end{proof}


\subsection{The Role and Non-Redundancy of Interior Full Support}
\label{app:counterexample_no_full_support}

Theorem~\ref{thm:pmot_exactness} shows that a zero-loss potential $\Phi$ recovers the optimal transport map provided that $\Phi$ is $C^2$ on $(0,1)\times\mathbb R^d$ and the induced distribution $\rho_t^\Phi$ has full support for every $t\in(0,1)$. The interior $C^2$ regularity ensures that the differential calculations and chain-rule arguments in the proof are valid. A natural question is whether the full-support condition can be removed or replaced by a weaker assumption. The following counterexamples address this question. Counterexample~1 gives a zero-loss potential for which $\rho_t^\Phi$ does not have full support and the induced terminal map is not optimal; in this example, the target measure $\mu_1$ also fails to have full support. Counterexample~2 further shows that requiring $\mu_1$ itself to have full support is still insufficient to guarantee optimality.

Both counterexamples use $p=2$. In this case $v_\Phi=-\nabla_x\Phi$, and hence, with $S_t(x)=(1-t)x+tF_\Phi(x)$,
\[
\nabla_x\Phi\bigl(t,S_t(x)\bigr)+F_\Phi(x)-x
=-\left[v_\Phi\bigl(t,S_t(x)\bigr)-\bigl(F_\Phi(x)-x\bigr)\right].
\]
Thus the potential-matching residual is the negative of the velocity residual used in the former flow-matching formulation. Their squared norms are identical, so the two zero-loss conditions coincide in these examples.

\paragraph{Counterexample 1: failure after removing interior full support.}
Let $d=2$, $p=2$, and $c_2(x,y)=\lVert x-y\rVert^2/2$. Fix $M>\sqrt{8}$ and define
\[
\begin{aligned}
a_-&:=(-2,-M/2), & u_-&:=(1,M), & b_-&:=a_-+u_-=(-1,M/2),\\
a_+&:=(2,M/2),  & u_+&:=(-1,-M), & b_+&:=a_++u_+=(1,-M/2).
\end{aligned}
\]
Choose $0<r<1/4$ and set
\[
A_-:=B(a_-,r/2),
\qquad
A_+:=B(a_+,r/2).
\]
Define
\[
\mu_0
:=
\frac12\operatorname{Unif}(A_-)
+
\frac12\operatorname{Unif}(A_+).
\]
Define $F$ $\mu_0$-almost everywhere by
\[
F(x):=
\begin{cases}
x+u_-, & x\in A_-,\\
x+u_+, & x\in A_+.
\end{cases}
\]
Define
\[
\mu_1:=F_\sharp\mu_0.
\]
Then
\[
\operatorname{supp}(\mu_1)
=
\overline{B(b_-,r/2)}\cup\overline{B(b_+,r/2)}
\neq\mathbb R^2.
\]

We next construct a smooth potential whose flow agrees with $F$ on the support of $\mu_0$. Define the moving centers
\[
c_-(t):=a_-+tu_-,
\qquad
c_+(t):=a_++tu_+,
\qquad t\in[0,1].
\]
Their first coordinates satisfy
\[
\bigl(c_+(t)\bigr)_1-\bigl(c_-(t)\bigr)_1=4-2t\geq2.
\]
Let $\chi\in C_c^\infty(\mathbb R^2)$ satisfy
\[
\chi(\xi)=1\quad\text{for }\lVert\xi\rVert\leq r,
\qquad
\operatorname{supp}(\chi)\subset B(0,2r).
\]
Because $r<1/4$, the two moving cutoff supports remain disjoint for all $t\in[0,1]$. Define
\[
\begin{aligned}
\Phi(t,y)
:={}&-\chi\bigl(y-c_-(t)\bigr)u_-\cdot\bigl(y-c_-(t)\bigr)\\
&-\chi\bigl(y-c_+(t)\bigr)u_+\cdot\bigl(y-c_+(t)\bigr),
\end{aligned}
\]
and let $v_\Phi=-\nabla_y\Phi$. On $B(c_-(t),r)$, the first cutoff is identically one and the second vanishes, so
\[
v_\Phi(t,y)=u_-.
\]
Similarly, $v_\Phi(t,y)=u_+$ on $B(c_+(t),r)$. The potential $\Phi$ is smooth on $[0,1]\times\mathbb R^2$, and $v_\Phi$ is smooth with uniformly compact spatial support. Hence it generates a unique global smooth flow, and every time slice $z_\Phi(t,\cdot)$ is a diffeomorphism of $\mathbb R^2$.

For $x\in A_-$, the curve $z(t)=x+tu_-$ satisfies
\[
\lVert z(t)-c_-(t)\rVert=\lVert x-a_-\rVert<r/2<r,
\]
and therefore
\[
\partial_tz(t)=u_-=v_\Phi\bigl(t,z(t)\bigr).
\]
Uniqueness of the flow gives $z_\Phi(t,x)=x+tu_-$ for every $t\in[0,1]$. The same argument gives $z_\Phi(t,x)=x+tu_+$ for $x\in A_+$. Consequently, for $\mu_0$-almost every $x$,
\[
z_\Phi(t,x)=(1-t)x+tF(x),
\qquad
F_\Phi(x)=z_\Phi(1,x)=F(x).
\]
It follows that
\[
v_\Phi\bigl(t,(1-t)x+tF_\Phi(x)\bigr)=F_\Phi(x)-x
\]
for every $t\in[0,1]$ and for $\mu_0$-almost every $x$. Thus $\mathcal L_{\mathrm{PM}}(\Phi)=0$. Since $(F_\Phi)_\sharp\mu_0=\mu_1$, the terminal loss also vanishes, and hence
\[
\mathcal L_{\mathrm{PMOT}}(\Phi)=0.
\]
Because the PMOT objective is nonnegative, $\Phi$ is a global minimizer. Nevertheless, its intermediate distributions occupy only two moving tubes:
\[
\operatorname{supp}(\rho_t^\Phi)
=
\overline{B(c_-(t),r/2)}\cup\overline{B(c_+(t),r/2)}
\neq\mathbb R^2,
\qquad t\in[0,1].
\]

The induced coupling is not optimal. Since the density of $\mu_0$ is strictly positive near $a_-$ and $a_+$ and $F$ is a translation on each ball, both $(a_-,b_-)$ and $(a_+,b_+)$ belong to the support of
\[
\pi:=(\operatorname{Id},F)_\sharp\mu_0.
\]
The cost of the assigned pairing is
\[
c_2(a_-,b_-)+c_2(a_+,b_+)
=\frac12\lVert u_-\rVert^2+\frac12\lVert u_+\rVert^2
=1+M^2.
\]
After exchanging the endpoints, $a_--b_+=(-3,0)$ and $a_+-b_-=(3,0)$, so
\[
c_2(a_-,b_+)+c_2(a_+,b_-)=9.
\]
Since $M>\sqrt{8}$, we have $1+M^2>9$. Thus the support of $\pi$ violates the two-point $c_2$-cyclical-monotonicity inequality, and $\pi$ is not an optimal coupling between $\mu_0$ and $\mu_1$.

Thus, without interior full support, a smooth potential may attain zero PMOT loss while inducing a nonoptimal terminal map. The obstruction is the persistent vacuum outside the two moving tubes: the potential-matching loss constrains the potential gradient, and equivalently the velocity in the quadratic case, only along mass-carrying trajectories, so the material-derivative identity cannot be extended to the entire interior space--time domain.

\paragraph{Counterexample 2: terminal full support does not replace interior full support.}
In Counterexample~1, the target measure \(\mu_1\) does not have full support. The following example further shows that even the condition \(\operatorname{supp}(\mu_1)=\mathbb R^d\) is insufficient to guarantee that a zero-loss terminal map is optimal when the intermediate distributions fail to have full support.

Let $d=2$, $p=2$, and fix $M>\sqrt{8}$. Define
\[
K_-:=\{x\in\mathbb R^2:x_1<-1\},
\qquad
K_+:=\{x\in\mathbb R^2:x_1>1\},
\]
and set
\[
a_-:=(-2,-M/2),
\qquad
a_+:=(2,M/2).
\]
For suitable normalizing constants $Z_-,Z_+>0$, let
\[
f_-(x):=\frac{1}{Z_-}e^{-\lVert x-a_-\rVert^2}\mathbf 1_{K_-}(x),
\qquad
f_+(x):=\frac{1}{Z_+}e^{-\lVert x-a_+\rVert^2}\mathbf 1_{K_+}(x),
\]
and define
\[
\mu_0(\mathrm dx)
:=
\frac12f_-(x)\,\mathrm dx
+
\frac12f_+(x)\,\mathrm dx.
\]
Then $\mu_0\ll\mathcal L^2$ and $\mu_0\in\mathcal P_2(\mathbb R^2)$.

Let
\[
u_-:=(1,M),
\qquad
u_+:=(-1,-M),
\]
and define
\[
F(x):=
\begin{cases}
x+u_-, & x\in K_-,\\
x+u_+, & x\in K_+,
\end{cases}
\qquad
\mu_1:=F_\sharp\mu_0.
\]
Since $F(K_-)=\{y_1<0\}$ and $F(K_+)=\{y_1>0\}$, we have
\[
\operatorname{supp}(\mu_1)=\mathbb R^2.
\]
For $S_t(x):=(1-t)x+tF(x)$ and $\rho_t:=(S_t)_\sharp\mu_0$,
\[
\operatorname{supp}(\rho_t)
=
\{y_1\leq-(1-t)\}\cup\{y_1\geq1-t\}
\neq\mathbb R^2,
\qquad t\in(0,1).
\]

Choose a smooth nondecreasing function $\vartheta\in C^\infty(\mathbb R;[0,1])$ such that
\[
\vartheta(r)=0\quad\text{for }r\leq-1,
\qquad
\vartheta(r)=1\quad\text{for }r\geq1,
\]
and define, for $(t,y)\in(0,1)\times\mathbb R^2$,
\[
\Phi(t,y)
:=
\left(
2\vartheta\left(\frac{y_1}{1-t}\right)-1
\right)(y_1+My_2).
\]
The values of $\Phi$ at $t=0$ and $t=1$ may be chosen arbitrarily. Then $\Phi\in C^\infty((0,1)\times\mathbb R^2)$. If $y_1<-(1-t)$, then
\[
\Phi(t,y)=-(y_1+My_2),
\qquad
v_\Phi(t,y)=-\nabla_y\Phi(t,y)=(1,M)=u_-.
\]
Similarly, if $y_1>1-t$, then
\[
\Phi(t,y)=y_1+My_2,
\qquad
v_\Phi(t,y)=(-1,-M)=u_+.
\]
For every $x\in K_-\cup K_+$ and $t\in(0,1)$, the straight-line trajectory $S_t(x)$ remains in the corresponding occupied component. Hence
\[
v_\Phi\bigl(t,S_t(x)\bigr)
=F(x)-x
=\partial_tS_t(x).
\]
Thus $F_\Phi=F$ $\mu_0$-almost everywhere, and
\[
\mathcal L_{\mathrm{PM}}(\Phi)=0,
\qquad
\mathcal L_{\mathrm{term}}(\Phi)=0,
\qquad
\mathcal L_{\mathrm{PMOT}}(\Phi)=0.
\]
In particular, $\Phi\in\operatorname{dom}(\mathcal L_{\mathrm{PMOT}})$ and is a global minimizer.

Nevertheless, $F$ is not optimal. Let
\[
x_-:=a_-,
\qquad
x_+:=a_+,
\qquad
y_-:=F(x_-)=(-1,M/2),
\qquad
y_+:=F(x_+)=(1,-M/2).
\]
For $c_2(x,y)=\lVert x-y\rVert^2/2$,
\[
\begin{aligned}
c_2(x_-,y_-)+c_2(x_+,y_+)&=1+M^2,\\
c_2(x_-,y_+)+c_2(x_+,y_-)&=9.
\end{aligned}
\]
Since $M>\sqrt{8}$, the first quantity is larger than the second. Moreover, the density of $\mu_0$ is positive near $x_-$ and $x_+$, so $(x_-,y_-)$ and $(x_+,y_+)$ belong to the support of $(\operatorname{Id},F)_\sharp\mu_0$. Hence this support is not $c_2$-cyclically monotone, and $(\operatorname{Id},F)_\sharp\mu_0$ is not an optimal coupling.

Thus $\mathcal L_{\mathrm{PMOT}}(\Phi)=0$ and $\operatorname{supp}(\mu_1)=\mathbb R^2$, but $F_\Phi$ is not an optimal Monge map. Terminal full support therefore does not replace the interior full-support condition in Theorem~\ref{thm:pmot_exactness}.

\section{Additional Experimental Results}
\label{app:additional_experiments}

\subsection{Additional Toy Generation Results}
\label{app:toy_generation}

Figure~\ref{fig:toy_generation_appendix} shows unconditional generation results
on the two-dimensional toy benchmarks. Samples are generated by transporting
standard Gaussian samples through the learned inverse flow. These results
verify endpoint generation quality, while the main text focuses on transport
geometry and $p$-specific alignment.

\begin{figure}[htbp]
    \centering
    \includegraphics[width=0.8\linewidth]{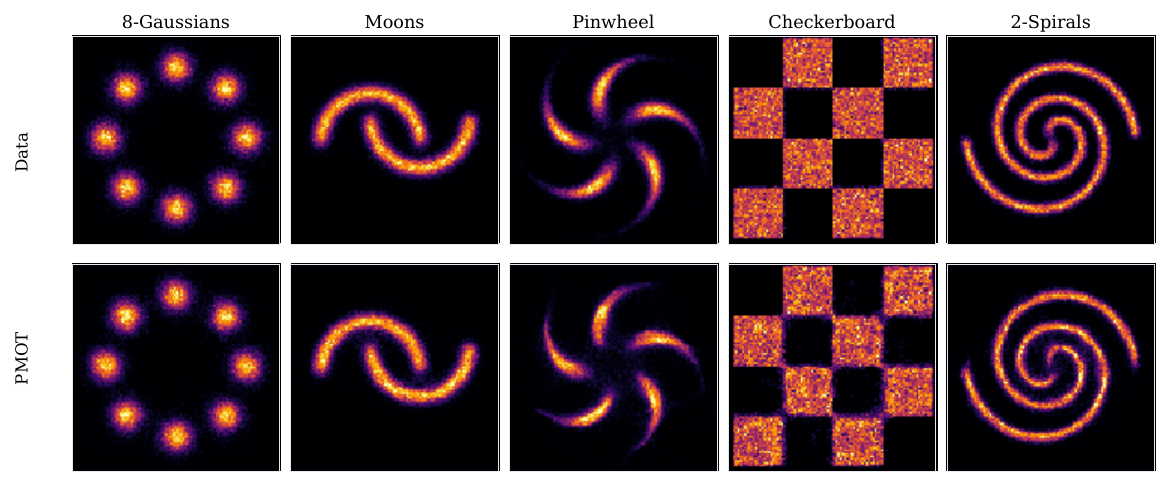}
    \caption{
    Unconditional generation on two-dimensional toy benchmarks.
    The top row shows samples from the target distributions, and the bottom row
    shows samples generated by PMOT from a standard Gaussian prior.
    All panels use the same two-dimensional histogram visualization protocol.
    }
    \label{fig:toy_generation_appendix}
\end{figure}

\subsection{Repeated \texorpdfstring{$p$}{p}-Swap Evaluation on Pinwheel}
\label{app:pinwheel_p_swap_multiseed}

In the main text, Figure~\ref{fig:pinwheel_p_swap_heatmap} shows a representative
$p$-swap evaluation on Pinwheel. To verify that the diagonal preference is not
due to a particular evaluation sample, we repeat the evaluation over five
independently sampled source and reference batches. For each repetition, we
compute the mean endpoint MSE over the diagonal entries, corresponding to
matched training and reference exponents, and over the off-diagonal entries,
corresponding to mismatched exponents.

\begin{table}[t]
\centering
\caption{
Repeated $p$-swap evaluation on Pinwheel. Diagonal entries compare each PMOT
checkpoint with the Sinkhorn barycentric reference computed using the matching
exponent. Off-diagonal entries use mismatched reference exponents. Values are
averaged over the corresponding heatmap entries in each evaluation repetition
and reported as mean $\pm$ standard deviation.
}
\label{tab:pinwheel_p_swap_multiseed}
\vspace{0.5em}
\begin{tabular}{lccc}
\toprule
Evaluations & Diagonal MSE $\downarrow$ & Off-diagonal MSE $\downarrow$ & Off/Diagonal \\
\midrule
5 resamples
& $0.0255 \pm 0.0031$
& $0.0359 \pm 0.0034$
& $1.410 \pm 0.048$ \\
\bottomrule
\end{tabular}
\end{table}

Across the five evaluation repetitions, the mean off-diagonal endpoint MSE is
$1.410\pm0.048$ times the mean diagonal MSE, corresponding to an average
increase of $41.0\%$. This indicates that the fixed PMOT checkpoints are
consistently closer to post-hoc OT references computed with their matching
cost exponents. These repetitions measure evaluation-sample variability rather
than variability across independent training runs.

\subsection{Likelihood-Weight Sensitivity on POWER}
\label{app:power_weight_sensitivity}

Table~\ref{tab:power_weight_sensitivity_appendix} reports an additional
likelihood-weight sensitivity study on POWER. PMOT uses the same fixed weights
$(\lambda_{\mathrm{PM}},\lambda_{\mathrm{NLL}})=(1,5)$ as in the main
high-dimensional experiments and does not include an HJB/R regularizer. For
OT-Flow, we keep the kinetic and HJB/R coefficients fixed and vary only the
likelihood coefficient $\lambda_C$.

\begin{table}[t]
\centering
\caption{
Likelihood-weight sensitivity on POWER. PMOT weights are reported as
$(\lambda_{\mathrm{PM}},\lambda_{\mathrm{NLL}})$, while OT-Flow weights are
reported as $(\lambda_L,\lambda_C,\lambda_R)$. OT-Flow keeps its kinetic and
HJB/R coefficients fixed while varying $\lambda_C$. Evaluation uses biased MMD
with 4000 generated and 4000 held-out samples. Lower is better.
}
\label{tab:power_weight_sensitivity_appendix}
\vspace{0.5em}
\begin{tabular}{llcc}
\toprule
Method & Weights & NLL $\downarrow$ & MMD $\downarrow$ \\
\midrule
PMOT & $(1,5)$ & -0.383 & $2.91{\times}10^{-4}$ \\
\midrule
OT-Flow & $(1,10,5)$ & -0.250 & $3.83{\times}10^{-4}$ \\
OT-Flow & $(1,50,5)$ & -0.287 & $3.91{\times}10^{-4}$ \\
OT-Flow & $(1,500,5)$ & -0.312 & $3.57{\times}10^{-4}$ \\
\bottomrule
\end{tabular}
\end{table}

Increasing the OT-Flow likelihood coefficient improves NLL on POWER, although
generation MMD varies non-monotonically. PMOT achieves lower NLL and MMD with
the fixed $(1,5)$ weighting and no HJB/R regularizer. Together with the
8-Gaussians sensitivity experiment, these results suggest that PMOT remains
effective without a heavily likelihood-dominated objective.

\subsection{Image Color Transformation}
\label{app:color_transform}

We use image color transformation to illustrate PMOT with sample-based terminal
matching in RGB space. In this experiment, the terminal discrepancy in
\eqref{eq:pmot_general_terminal_loss} is instantiated as MMD rather than the
NLL objective used for toy and tabular density estimation. The learned forward
flow transports the empirical color distribution of a content image toward that
of a style image. The resulting map is applied independently to each pixel, so
the spatial layout of the content image is retained.

We select two $256\times256$ images from ImageNet
\citep{deng2009imagenet,russakovsky2015imagenet}: an American chameleon
(synset n01682714) as the content image and a sulphur-crested cockatoo
(synset n01819313) as the style image. Their empirical RGB distributions are
treated as the source and target distributions, respectively. RGB values are
rescaled to $(0,1)^3$ and mapped to $\mathbb R^3$ using a componentwise logit
transform. We train PMOT with $p=2$,
$(\lambda_{\mathrm{PM}},\lambda_{\mathrm{MMD}})=(1,10)$, and a mixture of five
Gaussian kernels with bandwidths
$\sigma\in\{0.25,0.5,1.0,2.0,4.0\}$ for 25,000 iterations.

Table~\ref{tab:color_transform_loss} reports the final training losses.
Figure~\ref{fig:color_transform_transfer_montage} in the main text visualizes
the transformed image along the learned flow. The marginal RGB distributions
in Figure~\ref{fig:marginal_distribution} provide an additional qualitative
comparison between the source, transformed, and target color distributions.

\begin{table}[t]
\centering
\caption{
Final PMOT training losses for image color transformation.
}
\label{tab:color_transform_loss}
\vspace{0.5em}
\begin{tabular}{lcc}
\toprule
Method & $\mathcal L_{\mathrm{PM}}$ & $\mathcal L_{\mathrm{MMD}}$ \\
\midrule
PMOT & $2.891\times10^{-3}$ & $6.664\times10^{-4}$ \\
\bottomrule
\end{tabular}
\end{table}

\begin{figure}[htbp]
    \centering
    \includegraphics[width=\linewidth]
    {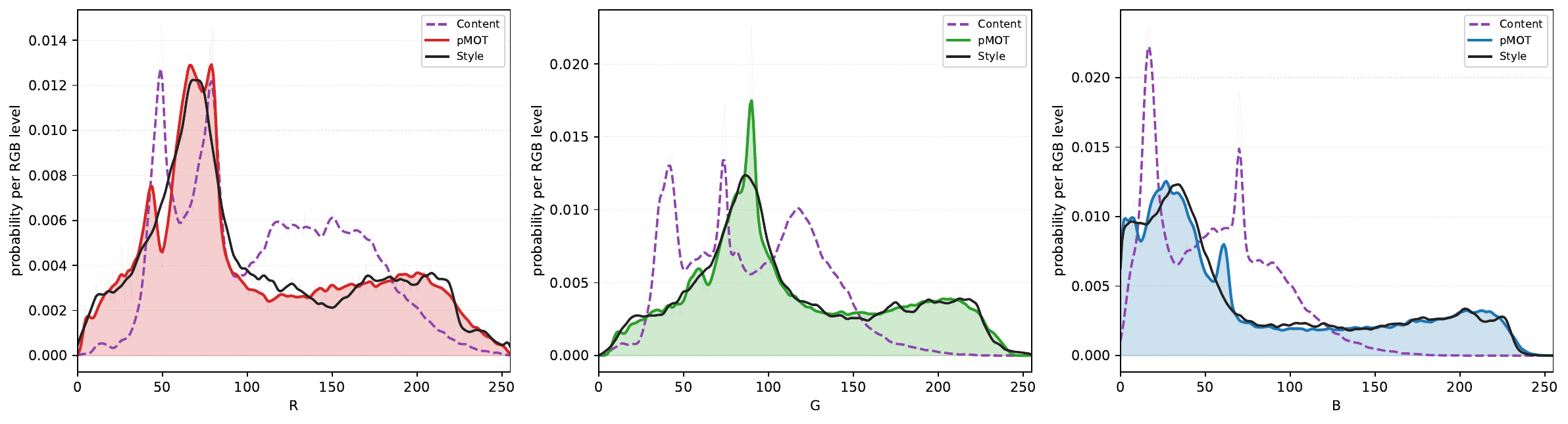}
    \caption{
    Marginal red, green, and blue distributions of the content image, the
    PMOT-transformed image, and the target style image. The transformed
    marginals move toward those of the target, providing qualitative evidence
    of color-distribution matching.
    }
    \label{fig:marginal_distribution}
\end{figure}

\end{document}